\DeclareMathOperator{\E}{\mathbf{E}}
\renewcommand{\P}{\operatorname{\mathbf{P}}}
\newcommand{\rank}{\operatorname{rank}}
\newcommand{\defeq}{\vcentcolon=}
\newcommand{\argmin}{\operatornamewithlimits{argmin}}
\newcommand{\<}{\langle}
\renewcommand{\>}{\rangle}
\newcommand{\norm}[1]{\lVert #1 \rVert}
\newcommand{\diag}{\operatorname{diag}}
\newcommand{\abs}[1]{\lvert #1 \rvert}
\newcommand{\bignorm}[1]{\left\lVert #1 \right\rVert}
\newcommand{\dkl}[2]{\operatorname{D_{KL}}(#1 \: \Vert \: #2)}
\newcommand{\projP}{\operatorname{\mathcal{P}}}
\newcommand{\opA}{\operatorname{\mathcal{A}}}
\newcommand{\R}{\mathbf{R}}
\newcommand{\var}{\operatorname{var}}
\newcommand{\card}{\operatorname{card}}
\newcommand{\indicator}[1]{\mathbf{1}_{\{ #1 \}}}
\newcommand{\poissondist}{\operatorname{Poisson}}
\newcommand{\multinomialdist}{\operatorname{Multinomial}}
\newcommand{\Mhat}{\widehat{M}}
\newtheorem{lemma}{Lemma}
\newtheorem{theorem}{Theorem}
\newtheorem{corollary}{Corollary}
\theoremstyle{definition}
\Crefname{assumption}{Assumption}{Assumptions}
\author{Andrew D.\ McRae \qquad Mark A.\ Davenport\\Georgia Institute of Technology}
\title{Low-rank matrix completion and denoising under Poisson noise}
\date{}
\begin{document}
\maketitle
\begin{abstract}
This paper considers the problem of estimating a low-rank matrix from the observation of all or a subset of its entries in the presence of Poisson noise. When we observe all entries, this is a problem of \emph{matrix denoising}; when we observe only a subset of the entries, this is a problem of \emph{matrix completion}. In both cases, we exploit an assumption that the underlying matrix is \emph{low-rank}. Specifically, we analyze several estimators, including a constrained nuclear-norm minimization program, nuclear-norm regularized least squares, and a nonconvex constrained low-rank optimization problem.
We show that for all three estimators, with high probability, we have an upper error bound (in the Frobenius norm error metric) that depends on the matrix rank, the fraction of the elements observed, and maximal row and column sums of the true matrix. We furthermore show that the above results are minimax optimal (within a universal constant) in classes of matrices with low rank and bounded row and column sums. We also extend these results to handle the case of matrix multinomial denoising and completion.
\end{abstract}

\section{Introduction}
\subsection{Low-rank models for count data}

In this paper, we consider the problem of estimating a non-negative matrix $M \in \R^{m \times n}$ given independent observations distributed according to $\poissondist(M_{ij})$
for $(i,j) \in \Omega$,
where $\Omega$ is a (not necessarily strict) subset of $\{1, \dots, m\} \times \{1, \dots, n\}$. If we do not make an observation for every entry of the matrix, the recovery problem is, in general, ill-posed in the absence of any additional assumptions on the underlying matrix.
A common assumption for this type of problem is that the unknown matrix $M$ is \textit{low-rank};
i.e., the dimension of the spans of the columns and rows of $M$ is much smaller than the actual numbers of columns and rows.
This assumption greatly reduces the number of degrees of freedom in the model,
making the recovery problem far more tractable.
Note that even if we do observe every entry, we can still exploit the structure of the model to reduce the error due to noise.

While the problems of matrix completion and denoising have received a significant amount of attention in the settings of Gaussian noise and of small, bounded (in $\ell_2$) perturbations (e.g., \parencite{Negahban2012,Koltchinskii2011,Candes2010}), Poisson noise models have received comparatively less attention.
In this paper, we focus primarily on the Poisson model, but we also examine closely-related multinomial models;
this includes the case in which we have a single probability distribution over matrix coordinates (for which our result is a corollary of our Poisson results)
as well as the case in which we make independent multinomial observations of matrix rows.  Collectively, these models are often natural in applications where the observations arise via some form of counting process.
The ability to recover (or de-noise) a low-rank signal from noisy, count-based observations is useful in many situations. We briefly mention two examples.

One potential application area involves imaging systems.
This includes conventional cameras (which often suffer from noise in low light or with short exposures),
but also 3-D imaging methods such as X-ray computed tomography (CT) and positron emission tomography (PET),
which, in medical imaging, would greatly benefit from an improved noise/radiation dose tradeoff. In these scenarios, the Poisson noise model is natural because the observations consists of counts of particle (e.g., photon) arrivals at a detector. In many of these settings, such as when observing a periodic or slowly-varying sequence of images, a low-rank assumption on the underlying data is natural
(see, e.g., \parencite{Zhou2014} for an overview of low-rank modeling in image applications).

Another important application is topic modeling, which is a common form of dimensionality reduction for text documents. In this case, our observations consist of counts of word occurrences in a corpus of documents. 
If we suppose that these documents can be decomposed according to a small set of topics, and that within each topic documents will exhibit similar word occurrence counts, then
a low-rank assumption on the word-frequency matrix is natural.
For example, the popular PLSI model \parencite{Hofmann1999} uses a multinomial probability model parameterized by a low-rank matrix.

Low-rank models are also popular in nonnegative matrix factorization, which is commonly applied in a range of contexts where count data is common.
A wide variety of such models have been developed,
along with inference algorithms such as expectation maximization, variational Bayes, and Markov chain Monte Carlo
\parencite{Lee1999,Canny2004,Cemgil2009,Mnih2007}.
These models and algorithms have been applied to many tasks, especially recommendation systems \parencite{Ma2011,Gopalan2015}.
However, the algorithms used are nonconvex, and there is little in the way of theoretical guarantees for their performance in the Poisson or multinomial setting.

\subsection{Summary of main results for Poisson noise}
In our analysis, we assume a Bernoulli sampling of the matrix entries:
i.e., the events $\{(i,j) \in \Omega\}$ are independent with probability $p \in (0, 1]$,
and the observed Poisson random variables are independent conditioned on $\Omega$.
Note that taking $p=1$ handles the case in which we observe every entry of the matrix.
For a matrix $A$, $\norm{A}_*$, $\norm{A}$, and $\norm{A}_F$ denote the nuclear norm, operator norm, and Frobenius norm of $A$, respectively.

Let $\opA_\Omega \colon \R^{m \times n} \to \R^\Omega$ denote the
entry-wise sampling operator given by
$(\opA_\Omega(Z))_{(i,j)} = Z_{ij}$
for $(i,j) \in \Omega$.
Note that its adjoint $\opA^*\colon \R^\Omega \to \R^{m \times n}$ maps the vector $(x_{(i,j)})_{(i,j) \in \Omega} \in \R^\Omega$ to the $m \times n$ matrix whose $(i,j)$th entry is $x_{(i,j)}$ if $(i,j) \in \Omega$ and zero otherwise.

Given observations $X \sim \poissondist(\opA_\Omega(M))$,
we consider several different estimators with similar theoretical properties.
The first can be interpreted as a matrix version of the \emph{Dantzig selector}~\parencite{Candes2007}:
\begin{equation}
	\label{eq:program_dantzig}
	\Mhat^{(1)}
	= \argmin_{M' \in [0, \infty)^{m \times n}} \norm{M'}_*
	\ \text{s.t.} \ \norm{\opA_\Omega^*(X) - p M'} \leq \delta
	,
\end{equation}
where $\delta > 0$ is a parameter which we will see how to set later.
The second is a nuclear-norm-regularized least-squares type estimator:
\begin{equation}
	\label{eq:program_reg_ls}
	\Mhat^{(2)} = \argmin_{M' \in [0, \infty)^{m \times n}} \norm{\opA_\Omega^*(X) - p M'}_F^2 + \lambda \norm{M'}_*,
\end{equation}
where $\lambda > 0$ is another parameter which we will set.
The third is least-squares under an exact low-rank constraint:
\begin{equation}
	\label{eq:program_exact_lr}
	\Mhat^{(3)} = \argmin_{M' \in [0, \infty)^{m \times n}} \norm{\opA_\Omega^*(X) - p M'}_F \ \text{s.t.} \ \rank(M') \leq r,
\end{equation}
where $r$ is an upper bound on the rank of the true rate matrix $M$. This problem is not convex (and is, in general, hard to solve directly while respecting nonnegativity constraints),
but we will see later how we can address this issue without affecting its theoretical properties.

\Cref{thm:upper}, which is the main result of \Cref{sec:upper}, states that, if $M$ has rank $r$,
and hyperparameters are properly chosen,
each of the estimators $\{\Mhat^{(i)}\}_{i=1}^3$ satisfies, with high probability,
\begin{equation}
	\label{eq:simple_ub}
	\norm{M - \Mhat^{(i)}}_F
	\lesssim \sqrt{\frac{r}{p}} \widetilde{\sigma}(M) + \text{logarithmic terms}
	,
\end{equation}
where
\[
	\widetilde{\sigma}(M)
	= \max_i \sqrt{ \sum_j M_{ij} + (1-p) M_{ij}^2}
	+ \max_j \sqrt{ \sum_i M_{ij} + (1-p) M_{ij}^2}
	.
\]
In many situations (see \Cref{sec:matching}),
the logarithmic terms are negligible,
so we can approximate this result by the bound
\begin{equation}
	\label{eq:rate-basic}
	\norm{M - \Mhat^{(i)}}_F \lesssim \sqrt{\frac{r}{p}} \widetilde{\sigma}(M)
	.
\end{equation}

\Cref{sec:lower} uses two standard methods to find lower bounds on the minimax risk of any estimator
in classes of matrices with bounded row and column sums.
These results (\Cref{thm:lower-variance,thm:lower-squared}) can be summarized as follows:
over all nonnegative matrices $M$ such that $\rank(M) \leq r$, and $\widetilde{\sigma}(M) \leq \sigma$,
we have
\[
	\inf_{\Mhat} \sup_M \E_M \norm{M - \Mhat}_F \gtrsim \sqrt{\frac{r}{p}} \sigma . 
\]
Thus, Theorem~\ref{thm:upper} is optimal (up to a multiplicative constant and an additive logarithmic factor) for this class of matrices.

To gain a more intuitive understanding of our result,
it is helpful to examine the formula for $\widetilde{\sigma}$.
For simplicity, assume, without loss of generality, that the row sums dominate the column sums, so that
\[
	\widetilde{\sigma} \approx \max_i \sqrt{\sum_j M_{ij} + (1-p) M_{ij}^2}
	.
\]
The two terms inside the sum have different roles.
The first term ($M_{ij}$) corresponds to the variance of the Poisson random variables.
Indeed, if we take $p=1$, this is the only term, so
our result has the form
\[
	\norm{M - \Mhat^{(i)}}_F \lesssim \sqrt{r} \left( \max_i \sqrt{ \sum_j M_{ij}} \right)
	.
\]
If we do not impose any structure on the model,
the maximum likelihood (and least-squares) estimate is $\Mhat^{\text{MLE}} = X$,
which has risk
\[
	\E \norm{M - \Mhat^{\text{MLE}}}_F^2 = \sum_{i,j} \var(X_{ij}) = \sum_{i,j} M_{ij}
	.
\]
If every row of $M$ has approximately the same sum,
estimators defined above improve on $\Mhat^{\text{MLE}}$ (in squared Frobenius error) by a factor of approximately $r/n$.
If the sums of the rows of $M$ differ significantly,
the improvement is smaller.  However, this should not be too surprising --- if the variance in the problem is already concentrated into a smaller sub-matrix,
we are effectively solving a smaller problem, and hence the low-rank assumption is less restrictive and, therefore, less beneficial.

The second term (of the form $(1-p)M_{ij}^2$) in the formula for $\widetilde{\sigma}$
corresponds to the inherent difficulty in estimating the values of a matrix due to the fact that we do not observe every entry.
This term in the lower bound applies regardless of the noise model,
even when there is no noise.
This might seem to contradict existing exact noiseless matrix completion results,
but we note here that such results make stronger assumptions (incoherence of the row and column spaces) beyond what we are assuming here.
In fact, the matrices used in the proof of \Cref{thm:lower-squared} are highly coherent.

Although this second error term is necessary for general matrices,
an interesting open problem is whether it could be entirely removed
(leaving only the variance term) when we assume additional structure (such as incoherence) on the true rate matrix.
Such a result would be a bridge between existing noisy and noiseless matrix completion literature;
the existence of exact completion for the noiseless case implies that current results for the noisy case (including this paper)
become highly suboptimal when the signal-to-noise ratio goes to infinity.
An exception is \parencite{Candes2010}, but we note that this approach is not without its own drawbacks as this approach leads to error rates which are suboptimal with respect to the rank $r$.
More recent work in this direction is \parencite{Ma,Chen};
however, these results depend strongly on the condition number of the true matrix,
and their dependence on matrix rank seems suboptimal.
Thus there is still much work to do in analyzing noisy completion of incoherent matrices.

\subsection{Summary of main results for multinomial denoising}
We can also derive an interesting result on multinomial matrix denoising as a corollary of our result on Poisson denoising.
If $P$ is a non-negative $m \times n$ matrix such that $\sum_{i,j} P_{ij} = 1$,
and we independently sample $N$ objects according to the probabilities contained in $P$,
the number of times each entry of $P$ is sampled (which we can denote by an $m \times n$ count matrix $X$)
has a (matrix) multinomial distribution.
Our results on Poisson denoising apply by considering a multinomial distribution to be a vector of independent Poisson variables conditioned on its sum.
\Cref{cor:multinomial} in \Cref{sec:multinomial}
shows that if $P$ has rank $r$, and none of the row and column sums of $P$ are too large,
there is an estimator $\widehat{P}$ of $P$
(which could be defined similarly to any of the estimators above)
such that
\[
	\norm{\widehat{P} - P}_F \lesssim \sqrt{\frac{r}{N(m \wedge n)}}
\]
One can easily check that, for the maximum likelihood estimator $\widehat{P}^{\text{MLE}} = X/N$,
$\E \norm{\widehat{P}^{\text{MLE}} - P}_F^2 \approx N^{-1}$,
so our result (approximately) reduces the squared error by a factor of $r / (m \wedge n)$,
which is the effective rank deficiency.

For a more complete exploration of multinomial matrix denoising,
we also consider a model in which our observations are independent multinomial samples from rows of a low-rank matrix.
Concretely, our observations are now a matrix $X$ whose \textit{rows},
which we denote $\{X_i\}_{i=1}^m$,
are independent and distributed according to $X_i \sim \multinomialdist(p_i, N_i)$,
where $\{p_i\}_{i=1}^m$ are the rows of a rank-$r$ $m \times n$ matrix $P$.
\Cref{thm:multinomial_rows} states that,
under mild conditions on the sums of \textit{columns} of $P$,
there is an estimator $\widehat{P}$
(defined similarly to those above) such that, with high probability,
\[
	\norm{D^{1/2} (\widehat{P} - P)}_F \lesssim \sqrt{r \log (m + n)},
\]
where $D = \diag(N_1, \dots, N_m)$.
It is easily checked that the maximum likelihood estimator $\widehat{P}_{\text{MLE}} = D^{-1} X$ has expected error $\E \norm{D^{1/2}(\widehat{P}^{\text{MLE}} - P)}_F^2 \approx m$,
so we again get a reduction in (squared) error that is (approximately, modulo a logarithmic factor) proportional to the reduction in degrees of freedom.

We do not analyze the multinomial estimation problems from a minimax risk standpoint,
but, due to the similarities between the Poisson and multinomial distributions,
we suspect that one could find similar matching lower bounds in a similar manner to the Poisson case. 

\subsection{Computation and implications for general noisy matrix completion}
\label{sec:implications}
Note that all three of our estimators would be very easy to compute if we discarded the nonnegativity constraint:
we could take a singular value decomposition of the matrix $\opA_\Omega^*(X)$
and then either do singular value soft thresholding (for $\Mhat^{(1)}$ and $\Mhat^{(2)}$) or truncate it to the $r$ largest singular values (for $\Mhat^{(3)}$).

We claim that ignoring the nonnegativity constraint does not change our analysis or the resulting error bounds at all;
this constraint does not appear anywhere in the proof of \Cref{thm:upper}.
Therefore, if computational ability is a limiting factor, we could simply take the more efficient approach of solving without any nonnegativity constraints, and the error bounds we have presented will still apply.
Projecting the result onto any convex constraint set that contains $M$ can then only improve the performance.

We also note here that although we have chosen to focus on Poisson noise,
our approach is fairly general and could apply to other types of noise.
Indeed, if $M$ were an arbitrary (not necessarily nonnegative) matrix,
and we make observations of the form $M_{ij} + \xi_{ij}$ for $(i,j) \in \Omega$,
and the $\xi_{ij}$'s are zero-mean noise variables with reasonably light tails,
we could adapt our arguments to show that, for each of our three estimators,
\[
	\norm{M - \Mhat}_F \lesssim \sqrt{\frac{r}{p}} \left( \max_i \sqrt{ \sum_j \var(\xi_{ij}) + (1-p) M_{ij}^2} + \max_j \sqrt{ \sum_i \var(\xi_{ij}) + (1-p) M_{ij}^2} \right).
\]
The lower bound \Cref{thm:lower-squared} is completely independent of the noise distribution;
a version of \Cref{thm:lower-variance} could be proved for many common distributions.

The above two points have some interesting implications for general matrix completion with noise.
Many of the existing algorithms, such as low-rank factorization \parencite{Keshavan2010}, iterative imputation \parencite{Mazumder2010}, or the many other algorithms, including \eqref{eq:program_dantzig}, that can be expressed as semidefinite programs, are fairly complex.
Our results suggest that, not only do simple SVD-based algorithms have theoretical properties that are just as good as current state-of-the-art guarantees for more complex algorithms,
but that, in a minimax error sense, \textit{it is impossible to do any better}.

This realization does not, however, imply that there is no value to more sophisticated algorithms.
As mentioned earlier, how well we can exploit incoherence in \textit{noisy} matrix completion remains an important open question,
and the matrices used in the proof of the minimax lower bound \Cref{thm:lower-squared} are highly coherent.
Therefore, it is likely that more sophisticated algorithms are still beneficial when trying to recover non-pathological (i.e., incoherent) matrices.

\subsection{Comparison to prior work}
\label{sec:lit}
There are several categories of existing literature to which we can compare our results.
Some papers explicitly consider Poisson noise, using a maximum-likelihood framework.
\textcite{Cao2016} consider nuclear-norm penalized maximum likelihood for matrices contained in a nuclear norm ball
(rather than exactly low-rank matrices).
This approach uses an empirical process argument
to bound the Kullback-Leibler divergence between the true and predicted distributions.
This argument requires a Lipschitz condition on the log-likelihood function,
which, for the Poisson distribution, requires imposing a lower bound on the rates.
\textcite{Soni2016} and \textcite{Soni2014} consider a penalized maximum likelihood estimator from a carefully-chosen finite set of candidates
(which is exponentially large in the size of the problem and hence computationally intractable).
The matrices considered have a non-negative low-rank factorization
(with a particular emphasis on the case when one factor is sparse).
They use an information-theoretic argument to bound the expected error in terms of Bhattacharyya distance.
The result of \parencite{Soni2016}, which applies to matrix completion, requires imposing a lower bound on the rates,
while that of \parencite{Soni2014}, which considers only denoising, does not.
All three papers find an upper bound on Frobenius error in terms of the statistical error metrics that they originally bound.

Other, more general approaches, are designed specifically with Frobenius-norm error in mind.
One class of methods uses ``restricted strong convexity'' arguments,
which were introduced by \textcite{Negahban2012}.
These methods rely on approximating the Frobenius norm in certain restricted classes of matrices (in which the error matrix must fall) using only samples of the entries.
These methods lead to simple and elegant proofs, but the concentration inequalities on which they rely require imposing uniform upper bounds in magnitude on both the true matrix entries and the estimator entries.
Other recent papers which use this type of argument include \parencite{Klopp2015,Klopp2014,Gunasekar2015}.
Another interesting paper which uses learning theory arguments to achieve a similar result is \parencite{Gaiffas2011}.

Another class of methods is direct singular value thresholding.
Papers with this approach include \parencite{Koltchinskii2011} (which inspired our approach) as well as \parencite{Klopp2011,Chatterjee2015,Gaiffas2017}.
These methods are very simple and lend themselves to simple proofs.

An interesting blend of techniques can be seen in the papers \parencite{Lafond2015,Gunasekar2014,Robin2018},
which combine some of the general approaches mentioned above with maximum likelihood estimation for exponential families of distributions.
These methods,
like those in \parencite{Cao2016} and \parencite{Soni2016}, are difficult to apply to the Poisson distribution without imposing a lower bound on rates
because, as the mean $\lambda$ of the distribution goes to $0$,
the ``natural parameter'' $\log \lambda$ goes to $-\infty$,
whereas the general methods used require parameters to be bounded.
They also require (approximate) low rank in the matrix of natural parameters.
In the Poisson case, this is equivalent to assuming a bound on the rank of the matrix $[\log M_{ij}]$ of elementwise logarithms of the means,
which is somewhat non-standard, and certainly not the same as bounding the rank of the original matrix $M$.

There is some previous work on the denoising problem in terms of Poisson or exponential family principal component analysis (PCA).
Papers in this area include \parencite{Collins2001}, which recommends maximum likelihood approaches;
\parencite{Chiquet2018}, which uses variational Bayesian inference;
and \parencite{Liu2018}, which uses a singular value shrinkage algorithm on the means (much like we do).
The recent preprint \cite{Kenney2019} examines a variety of models with random scaling factors and nonlinearities.
These papers do not contain theoretical results applicable to our problem, however.
\parencite{Liu2018} is related to \parencite{Dobriban2020}, which contains an asymptotic analysis of a similar singular value shrinkage algorithm for more general problems (including matrix completion).
Another work in this area is \parencite{Chen2015},
which contains consistency results for low-dimensional subspace recovery.

Most of the papers mentioned above do not find error bounds
which explicitly depend on the ``true'' rate matrix;
rather, they find uniform upper bounds for classes of structured matrices with uniform upper (and, sometimes, lower) bounds on the entries.
To compare our results directly to this literature,
we consider what we obtain when we only impose a uniform upper and lower bounds (by, say $\lambda_{\max}$ and $\lambda_{\min}$) on the matrix entries.
The approximate bound of \eqref{eq:rate-basic} reduces to
\[
	\norm{\Mhat - M}_F^2 \lesssim (\lambda_{\max} + (1-p) \lambda_{\max}^2) \frac{r m }{p},
\]
where we have assumed, without loss of generality, that $m \geq n$.
Previous results show similar error rates in terms of matrix dimensions for exactly low-rank matrices.
For example, \parencite{Soni2016} establishes a bound of
\[
	\E \norm{\Mhat - M}_F^2 \lesssim \frac{\lambda_{\max}^3}{\lambda_{\min}} \frac{r m}{p} \log m,
\]
which provides a similar dependence on $r$, $m$, and $p$,
but with an additional logarithmic term and a worse dependence on the minimum and maximum matrix values.
In a slightly different setting, \parencite{Cao2016} shows that for matrices in the nuclear norm ball of radius $\lambda_{\max} \sqrt{rmn}$
(which is a convex relaxation of the exact low-rank constraint),
we instead obtain (ignoring logarithmic terms and a complicated but severe dependence on $\lambda_{\max}$ and $\lambda_{\min}$) an error bound of
\[
	\norm{\Mhat - M}_F^2 \lesssim
	\frac{\sqrt{rn} m}{\sqrt{p}},
\]
where $p$ is now the number of samples for entry in a uniform-at-random sampling model.
The different dependence on $r$ and $p$ is interesting,
but, if one compares it to results in linear regression over $\ell_1$ balls (see, e.g., \parencite{Raskutti2011}),
the rate given is perhaps not surprising.
To compare to some of the more general methods mentioned above, we note that,
if we consider the generalization of our method mentioned in \Cref{sec:implications},
and we assume a uniform upper bound on the magnitudes of matrix elements and the noise variances,
our results are comparable to \parencite{Klopp2015} (albeit under less-strict assumptions).

As noted earlier, our paper uses fairly general matrix completion methods.
For the Frobenius norm error metric,
this gives us an advantage over more distribution-specific approaches such as \parencite{Cao2016,Soni2016,Lafond2015,Gunasekar2014},
in part because we do not have to approximate the Frobenius-norm error by a statistical divergence measure or by a norm in a transformed parameter space.
Our results also do not suffer from the fact that a Poisson distribution's likelihood function is ill-conditioned for very small rates.
In addition, our results avoid a multiplicative logarithmic factor that appears in much of the previous literature
(replacing it with an additive factor that is often negligible);
this achievement (which also appears in \parencite{Klopp2015}) is almost entirely due to the use of recent results in bounding the operator norm of a random matrix (such as \parencite{Bandeira2016}).

Finally, much of the previous literature in the Poisson case
(from those mentioned above, \parencite{Lafond2015, Cao2016, Soni2016})
finds lower bounds on minimax risk in certain classes of matrices.
Although these lower bounds have the same large-scale error rate
(in terms of the rank and dimensions of the matrix and the number of samples) as the corresponding upper bounds,
they differ from the upper bounds by factors that are logarithmic in the problem size and that depend on the ratio of largest to smallest allowable rates.
To our knowledge, the results in this paper are the first for noisy low-rank matrix completion
in which the minimax rate for large classes of matrices is found to within a universal constant.

We are aware of much less theoretical work for low-rank denoising.
One recent work that is worth noting is \parencite{Huang2018}.
This paper shows that, in the case of a matrix multinomial distribution, one can achieve a tight error bound in $\ell_1$ distance (sum of absolute values of entries)
for certain factorizable probability matrices
using $O(m r^4)$ samples.
It is difficult to compare this directly to our result,
since it is in the much stronger $\ell_1$ norm,
but we note that, in a similar fashion as many of the results on Poisson observations, this paper also relies on lower bounding certain sums of entries in the factor matrices.
Other theoretical work on topic modeling includes \cite{Arora2013,Anandkumar2014,Bansal2014,Ke2017,Bing2018}.

We add a final caveat to our results by noting that $\norm{\Mhat - M}_F$ might not always be the most appropriate error metric;
for example, there is a much larger difference qualitatively (and quantitatively, if we use an appropriate statistical divergence)
between Poisson distributions of means 0 and 10 than between Poisson distributions of means 100 and 110.
We see a similar disconnect between squared error and other probabilistic metrics in the case of the multinomial distribution.
Further investigation of distribution-specific methods (such as maximum likelihood) that yield bounds in more statistically-motivated metrics is thus certainly warranted.

\subsection{Paper outline}
The remainder of this paper is organized as follows. 
\Cref{sec:both_upper} contains the the formal statements and proofs of the upper bounds on error in this paper.
\Cref{sec:lower} contains the statements and proofs of two separate minimax lower bounds which,
when combined, yield the matching lower bound to \eqref{eq:rate-basic}.
\Cref{sec:matching} also discusses briefly some situations where the approximation of \eqref{eq:rate-basic} is accurate,
and thus the upper and lower bounds match within a universal constant.

\section{Upper bounds}
\label{sec:both_upper}
\subsection{Poisson noise}
\label{sec:upper}
This section is dedicated to proving the main result of this paper, which is the following theorem.
\begin{theorem}
	\label{thm:upper}
	Let $M$ be a non-negative $m \times n$ matrix with rank $r$.
	Let $\lambda_{\text{max}} = \max_{ij} M_{ij}$,
	and let
	\begin{align*}
		\widetilde{\sigma}(M)
		= \max_i \sqrt{ \sum_j M_{ij} + (1-p) M_{ij}^2}
		+ \max_j \sqrt{ \sum_i M_{ij} + (1-p) M_{ij}^2}
		.
	\end{align*}
	Suppose $\Omega \subset \{1, \dots, m\} \times \{1, \dots, n\}$
	is chosen according to a Bernoulli sampling model
	with sampling probability $p$,
	and suppose, conditionally on $\Omega$, $X \sim \poissondist(\opA_\Omega(M))$.
	Set $\epsilon \in (0, 1/2)$,
	and let
	\begin{equation}
	\label{eq:opnorm_ub}
		A(M, p, \epsilon) = 2\sqrt{p} \widetilde{\sigma}(M) + \frac{8 \epsilon}{\sqrt{mn}}
			+ C \max\left\{ \lambda_{\mathrm{max}}, 4 \log \frac{2mn}{\epsilon} \right\} \sqrt{\log \frac{m \vee n}{\epsilon}},
	\end{equation}
	where $C$ is a universal constant.
	
	Then, with probability at least $1 - 2\epsilon$, if $\delta \geq A(M, p, \epsilon)$ and $\lambda \geq 2p A(M, p, \epsilon)$,
	we also have
	\[
	 	\norm{\Mhat^{(1)} - M}_F \leq \frac{4\sqrt{2r} \delta}{p}
	\]
	and
	\[
	 	\norm{\Mhat^{(2)} - M}_F \leq \frac{2 \sqrt{2r} \lambda}{p^2}.
	\]
	Moreover, we also have that with probability at least $1- 2\epsilon$,
	\[
		\norm{\Mhat^{(3)} - M}_F \leq \frac{2 \sqrt{2r}}{p} A(M, p, \epsilon).
	\]
\end{theorem}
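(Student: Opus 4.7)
All three estimator bounds reduce to a single concentration inequality: on an event $\mathcal{E}$ of probability at least $1-2\epsilon$, the noise matrix $E := \opA_\Omega^*(X) - pM$ satisfies $\norm{E} \le A(M,p,\epsilon)$. On $\mathcal{E}$, each per-estimator conclusion follows from a deterministic nuclear-norm argument that is by now standard. My plan is therefore: (i) establish the operator-norm bound on $E$, and (ii) run three short deterministic arguments on $\mathcal{E}$, one per estimator.

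\textbf{Step 1: operator-norm control.} The entries $E_{ij}$ are independent with mean zero and
\[
  \var(E_{ij}) = p M_{ij} + p(1-p) M_{ij}^2,
\]
so the row/column variance parameters satisfy
\[
  \max_i\sqrt{\textstyle\sum_j \var(E_{ij})} \;+\; \max_j\sqrt{\textstyle\sum_i \var(E_{ij})} \;=\; \sqrt{p}\,\widetilde\sigma(M),
\]
which exactly matches the leading $2\sqrt p\,\widetilde\sigma(M)$ term in $A$. The obstacle is that the $E_{ij}$ are unbounded, so matrix Bernstein does not apply directly. I would truncate each Poisson observation at $B \asymp \max\{\lambda_{\max},\, \log(mn/\epsilon)\}$; a union bound over Poisson tails shows that no truncation occurs with probability at least $1-\epsilon$, and the (tiny) mean shift this introduces is absorbed by the $8\epsilon/\sqrt{mn}$ slack term in $A$. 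To the truncated, recentered matrix I would then apply a sharp rectangular matrix-Bernstein / noncommutative-Khintchine inequality for independent heterogeneous entries (e.g.\ Bandeira--van Handel, as cited in the related-work discussion); the variance piece yields $\sqrt p\,\widetilde\sigma(M)$, and the bounded-entry tail, failing with probability at most $\epsilon$, contributes the $B\sqrt{\log((m\vee n)/\epsilon)}$ term. The two failure probabilities sum to $2\epsilon$, producing $\mathcal{E}$.

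\textbf{Step 2: deterministic arguments.} Work on $\mathcal{E}$ and set $\Delta = \Mhat^{(i)} - M$; let $T$ be the tangent subspace of the nuclear norm at $M$, so that $\rank(P_T(\Delta)) \le 2r$ and $\norm{P_T(\Delta)}_* \le \sqrt{2r}\,\norm{P_T(\Delta)}_F$. For $\Mhat^{(1)}$: $M$ is feasible since $\delta \ge A \ge \norm{E}$, so the triangle inequality gives $\norm{p\Delta} \le 2\delta$; optimality of the nuclear norm together with the usual subgradient bookkeeping gives $\norm{P_{T^\perp}(\Delta)}_* \le \norm{P_T(\Delta)}_*$, hence $\norm{\Delta}_* \le 2\sqrt{2r}\,\norm{\Delta}_F$; trace duality $\norm{\Delta}_F^2 \le \norm{\Delta}_*\norm{\Delta}$ then yields $\norm{\Delta}_F \le 4\sqrt{2r}\,\delta/p$. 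For $\Mhat^{(2)}$: compare objectives at $\Mhat^{(2)}$ and $M$; expanding $\norm{E - p\Delta}_F^2$, bounding $\abs{\langle E,\Delta\rangle} \le \norm{E}\,\norm{\Delta}_*$, using the subgradient inequality $\norm{M}_* - \norm{M+\Delta}_* \le \norm{P_T(\Delta)}_* - \norm{P_{T^\perp}(\Delta)}_*$, and invoking $\lambda \ge 2pA$ cancels the $P_{T^\perp}$ contributions and leaves $p^2\norm{\Delta}_F^2 \le 2\lambda\sqrt{2r}\,\norm{\Delta}_F$. For $\Mhat^{(3)}$: both matrices have rank at most $r$, so $\rank(\Delta)\le 2r$ and $\norm{\Delta}_* \le \sqrt{2r}\,\norm{\Delta}_F$ directly; optimality gives $\norm{E - p\Delta}_F \le \norm{E}_F$, so $p^2\norm{\Delta}_F^2 \le 2p\langle E,\Delta\rangle \le 2pA\sqrt{2r}\,\norm{\Delta}_F$, yielding the claimed bound.

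\textbf{Main obstacle.} Step 2 is standard bookkeeping; all the genuine work is in Step 1. The delicate point is producing exactly the sharp variance-driven $\sqrt p\,\widetilde\sigma(M)$ term, in which row and column standard deviations appear additively and with no multiplicative $\sqrt{\log(m\vee n)}$ attached to the dominant term. This requires a rectangular matrix-Bernstein inequality calibrated to nonuniform variances, together with a careful truncation to control the heavy Poisson tails when the rates $M_{ij}$ vary substantially across the matrix.
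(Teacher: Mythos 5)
Your proposal is correct and follows essentially the same route as the paper: the deterministic nuclear-norm/tangent-space arguments in your Step 2 are the paper's Lemmas \ref{lem:basic_1} and \ref{lem:basic_2}, and your Step 1 (truncate the Poisson noise at the scale $\max\{\lambda_{\max},\log(mn/\epsilon)\}$, absorb the truncation bias into the $8\epsilon/\sqrt{mn}$ term, and apply the Bandeira--van Handel bound to the truncated recentered matrix) is exactly the paper's Lemmas \ref{lem:subexp} and \ref{lem:pois-tail} combined with Lemma \ref{lem:opnorm-handel}. No substantive differences to report.
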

The result follows from a series of lemmas.
The first steps in upper bounding the error are the following (deterministic) results.
\begin{lemma}
	\label{lem:basic_1}
	Suppose $M$ is a rank-$r$ matrix such that $\norm{\opA_\Omega^* (X) - p M} \leq \delta = \frac{\lambda}{2p}$.
	Then, for $i \in \{1, 2\}$,
	\begin{equation}
		\norm{\Mhat^{(i)} - M}_F \leq \frac{4 \sqrt{2r} \delta}{p} = \frac{2 \sqrt{2r} \lambda}{p^2}.
	\end{equation}
	
\end{lemma}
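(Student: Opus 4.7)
My plan is to run the standard matrix Dantzig-selector / matrix-Lasso argument. I will set $Y = \opA_\Omega^*(X) - pM$, so the hypothesis reads $\norm{Y} \leq \delta = \lambda/(2p)$, let $H = \Mhat^{(i)} - M$, and take a thin SVD $M = U\Sigma V^*$ with $U, V$ having $r$ columns. I will decompose $H = H_1 + H_2$, where $H_1 := UU^*H + (I-UU^*)HVV^*$ and $H_2 := (I-UU^*)H(I-VV^*)$. Three facts will do the heavy lifting: (i) $\rank(H_1) \leq 2r$, so $\norm{H_1}_* \leq \sqrt{2r}\,\norm{H_1}_F$; (ii) the column and row spaces of $M$ are orthogonal to those of $H_2$, which yields the nuclear-norm additivity $\norm{M+H_2}_* = \norm{M}_* + \norm{H_2}_*$; and (iii) the $H_1,H_2$ are Frobenius-orthogonal, so $\norm{H_1}_F \leq \norm{H}_F$. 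Combining (ii) with the reverse triangle inequality then produces the cone-type bound $\norm{M+H}_* \geq \norm{M}_* + \norm{H_2}_* - \norm{H_1}_*$.

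For $\Mhat^{(1)}$, I will first note that $M$ is feasible (by the hypothesis on $Y$) and that $\Mhat^{(1)}$ is nuclear-norm-optimal among feasible points, so $\norm{M+H}_* \leq \norm{M}_*$. The cone bound then forces $\norm{H_2}_* \leq \norm{H_1}_*$, so $\norm{H}_* \leq 2\norm{H_1}_* \leq 2\sqrt{2r}\,\norm{H}_F$. Separately, since both $M$ and $\Mhat^{(1)}$ satisfy the data-fit constraint, the triangle inequality gives $\norm{pH} \leq 2\delta$. I will finish by invoking operator/nuclear duality: $\norm{H}_F^2 = \<H,H\> \leq \norm{H}\cdot\norm{H}_* \leq (2\delta/p)\cdot 2\sqrt{2r}\,\norm{H}_F$, which rearranges to $\norm{H}_F \leq 4\sqrt{2r}\,\delta/p$.

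For $\Mhat^{(2)}$, the key is the optimality inequality $\norm{Y-pH}_F^2 + \lambda\norm{M+H}_* \leq \norm{Y}_F^2 + \lambda\norm{M}_*$, which expands to
\[
p^2\,\norm{H}_F^2 \leq 2p\<Y,H\> - \lambda\bigl(\norm{M+H}_* - \norm{M}_*\bigr).
\]
Using $\<Y,H\> \leq \norm{Y}\cdot\norm{H}_* \leq \delta(\norm{H_1}_* + \norm{H_2}_*)$ together with the cone lower bound on $\norm{M+H}_* - \norm{M}_*$, this becomes
\[
p^2\,\norm{H}_F^2 \leq (2p\delta + \lambda)\norm{H_1}_* - (\lambda - 2p\delta)\norm{H_2}_*.
\]
The hypothesis $\lambda \geq 2p\delta$ makes the second term non-positive; bounding $\norm{H_1}_* \leq \sqrt{2r}\,\norm{H}_F$ and $2p\delta + \lambda \leq 2\lambda$ then yields $\norm{H}_F \leq 2\sqrt{2r}\,\lambda/p^2$.

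I do not anticipate any serious conceptual obstacle here — this is bread-and-butter matrix-Lasso analysis. The most care-requiring steps will be verifying the nuclear-norm additivity $\norm{M+H_2}_* = \norm{M}_* + \norm{H_2}_*$ from the orthogonality of the tangent-space decomposition, and then tracking constants through the chain of triangle and duality inequalities so that the prefactors $4\sqrt{2r}$ and $2\sqrt{2r}$ come out precisely as stated.
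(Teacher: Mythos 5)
Your proposal is correct and follows essentially the same route as the paper: your $H_1, H_2$ are exactly the projections $\projP_T(H), \projP_{T^\perp}(H)$ onto the tangent space of $M$ and its complement, and both arguments for $\Mhat^{(1)}$ reduce to the cone condition $\norm{H_2}_* \leq \norm{H_1}_*$ followed by the triangle-inequality bound $\norm{H} \leq 2\delta/p$ and operator/nuclear duality, while the $\Mhat^{(2)}$ optimality expansion and constants match as well. The only cosmetic difference is that you obtain the cone condition from nuclear-norm additivity over orthogonal row/column spaces, whereas the paper constructs an explicit subgradient $UV^* + \projP_{T^\perp}(W)$ — two standard phrasings of the same fact.
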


\begin{lemma}
	\label{lem:basic_2}
	Suppose $M$ is a rank-$r$ matrix.
	Then
	\[
		\norm{\Mhat^{(3)} - M}_F \leq \frac{2 \sqrt{2r}}{p} \norm{\opA_\Omega^*(X) - p M}.
	\]
\end{lemma}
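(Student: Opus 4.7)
}
The plan is to use the optimality of $\Mhat^{(3)}$ together with the low-rank structure of the error $\Delta \defeq \Mhat^{(3)} - M$, exploiting duality between the nuclear and operator norms. Since $M$ itself is nonnegative and has rank at most $r$, it is feasible for the program \eqref{eq:program_exact_lr}, so optimality of $\Mhat^{(3)}$ yields
\[
	\norm{\opA_\Omega^*(X) - p\Mhat^{(3)}}_F^2 \leq \norm{\opA_\Omega^*(X) - pM}_F^2.
\]

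Writing $Z \defeq \opA_\Omega^*(X) - pM$, the left-hand side equals $\norm{Z - p\Delta}_F^2 = \norm{Z}_F^2 - 2p \langle \Delta, Z\rangle + p^2 \norm{\Delta}_F^2$. After cancelling $\norm{Z}_F^2$ and rearranging, I obtain the basic inequality
\[
	p \norm{\Delta}_F^2 \leq 2 \langle \Delta, Z\rangle.
\]
This is the heart of the proof: it converts the optimality condition into a linear functional of $\Delta$, which is the right object to bound using low-rank structure.

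Now I exploit that $\Delta$ is the difference of two matrices of rank at most $r$, so $\rank(\Delta) \leq 2r$. By duality of the nuclear and operator norms, $\langle \Delta, Z\rangle \leq \norm{\Delta}_* \norm{Z}$, and by the Cauchy--Schwarz bound $\norm{A}_* \leq \sqrt{\rank(A)}\, \norm{A}_F$ applied to $\Delta$, we get $\norm{\Delta}_* \leq \sqrt{2r}\, \norm{\Delta}_F$. Plugging these two estimates into the displayed inequality, dividing by $\norm{\Delta}_F$ (the case $\Delta = 0$ being trivial), and dividing by $p$, I recover
\[
	\norm{\Mhat^{(3)} - M}_F \leq \frac{2\sqrt{2r}}{p} \norm{\opA_\Omega^*(X) - pM},
\]
which is the claimed bound.

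The step I expect to be the only subtle one is the very first: one might be tempted to apply a triangle inequality directly in Frobenius norm, which would yield a bound in terms of $\norm{Z}_F$, far too weak for our purposes. The point is that squaring the optimality inequality and expanding produces an inner product rather than a Frobenius norm of $Z$, which is exactly the form that allows us to cash in the low-rank structure of $\Delta$ via nuclear/operator duality. Note also that the nonnegativity constraint plays no role beyond ensuring that $M$ is feasible, consistent with the remark in Section~\ref{sec:implications}.
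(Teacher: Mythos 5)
Your proof is correct and follows essentially the same route as the paper: use feasibility of $M$ and optimality of $\Mhat^{(3)}$, expand the square to isolate the inner product $\langle \Delta, Z\rangle$, then apply nuclear/operator-norm duality together with $\norm{\Delta}_* \leq \sqrt{2r}\,\norm{\Delta}_F$ since $\rank(\Delta) \leq 2r$. No gaps.
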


\begin{proof}[Proof of \Cref{lem:basic_1}]
	Let $M = U \Sigma V^*$ be the singular value decomposition of $M$,
	where $U \in \R^{m \times r}$ and $V \in \R^{n \times r}$ are such that $U^* U = V^* V = I_r$,
	and $\Sigma$ is an $r \times r$ diagonal matrix with positive entries on the diagonal.
	Let $T$ be the subspace of $\R^{m \times n}$ spanned by matrices of the form
	$U A$ and $B V^T$ for arbitrary matrices $A \in \R^{r \times n}$ and $B \in \R^{m \times r}$.
	We denote by $\projP_T$ and $\projP_{T^\perp}$, respectively, the orthogonal projections onto $T$ and its orthogonal complement $T^\perp$.
	
	Denote $H^{(1)} = \Mhat^{(1)} - M$.
	Because $M$ is feasible,
	and the nuclear norm is a convex function,
	we have 
	\[
		0 \geq \norm{\Mhat^\delta}_* - \norm{M}_*
		\geq \< H^{(1)}, Z\>
		,
	\]
	where $Z \in \partial \norm{M}_*$ is any subgradient of the nuclear norm function at the point $M$.
	Such a subgradient must have the form
	\[
		Z = U V^* + \projP_{T^\perp} (W)
		,
	\]
	where $W$ is an arbitrary matrix with $\norm{W} \leq 1$.
	By the duality of the nuclear norm and operator norm,
	we can choose $W$ so that $\<W, \projP_{T^\perp} (H^{(1)})\> = \norm{\projP_{T^\perp} (H^{(1)})}_*$.
	We then have
	\begin{align*}
		0
		&\geq \< H^{(1)}, UV^* + \projP_{T^\perp} (W) \> \\
		&= \< \projP_T (H^{(1)}), U V^*\> + \norm{\projP_{T^\perp} (H^{(1)})}_* \\
		&\geq -\norm{\projP_T (H^{(1)})}_* + \norm{\projP_{T^\perp} (H^{(1)})}_*
		,
	\end{align*}
	where the last inequality follows from the fact that $\norm{U V^*} = 1$.
	We therefore have
	\[
		\norm{\projP_{T^\perp} (H^{(1)})}_* \leq \norm{\projP_T (H^{(1)})}_*
		.
	\]
	Hence
	\begin{align*}
		\norm{H^{(1)}}_*
		&\leq 2 \norm{\projP_T (H^{(1)})}_* \\
		&\leq 2 \sqrt{2r} \norm{\projP_T (H^{(1)})}_F \\
		&\leq 2 \sqrt{2r} \norm{(H^{(1)})}_F
		,
	\end{align*}
	where the second inequality follows from the fact that any element of $T$ has rank at most $2r$.
	
	By the triangle inequality and the homogeneity of the operator norm,
	we also have
	\[
		\norm{H^{(1)}} \leq \frac{2 \delta}{p}
		.
	\]
	Thus
	\begin{align*}
		\norm{H^{(1)}}_F^2
		&= \< H^{(1)}, H^{(1)}\> \\
		&\leq \norm{H^{(1)}} \norm{H^{(1)}}_* \\
		& \leq \frac{4 \sqrt{2r} \delta}{p} \norm{H^{(1)}}_F
		,
	\end{align*}
	and the first part of the result immediately follows.
	
	The proof of the second part is similar;
	letting $H^{(2)} = \Mhat^{(2)} - M$, we now have, by the optimality of $\Mhat$,
	\begin{align*}
		0 &\geq \norm{\opA_\Omega^*(X) - p\Mhat^{(2)}}_F^2 + \lambda \norm{\Mhat^{(2)}}_* - \left( \norm{\opA_\Omega^*(X) - pM}_F^2 + \lambda\norm{M}_* \right) \\
		&= p^2 \norm{\Mhat^{(2)}}_F^2 - p^2 \norm{M}_F^2 + p \< \opA_\Omega^*(X), M - \Mhat^{(2)} \> + \lambda \left( \norm{\Mhat^{(2)}}_* - \norm{M}_* \right) \\
		&= p^2 \norm{H^{(2)}}_F^2 + 2p \< \opA_\Omega^*(X) - pM, H^{(2)} \> + \lambda \left( \norm{\Mhat^{(2)}}_* - \norm{M}_* \right).
	\end{align*}
	Noting, as before, that $\norm{\Mhat^{(2)}}_* - \norm{M}_* \geq \norm{\projP_{T^\perp} (H^{(2)})}_* - \norm{\projP_T (H^{(2)})}_*$,
	and that
	\[
		\abs{ \< \opA_\Omega^*(X) - pM, H^{(2)} \> } \leq \norm{\opA_\Omega^*(X) - pM} \norm{H^{(2)}}_*
		\leq \frac{\lambda}{2p} \norm{H^{(2)}}_*,
	\]
	we have
	\begin{align*}
		\norm{H^{(2)}}_F^2
		&\leq \frac{\lambda}{p^2}( \norm{H^{(2)}}_* + \norm{\projP_T (H^{(2)})}_* - \norm{\projP_{T^\perp} (H^{(2)})}_* ) \\
		&\leq \frac{2 \lambda}{p^2} \norm{\projP_T (H^{(2)})}_* \\
		&\leq \frac{2 \sqrt{2r} \lambda}{p^2} \norm{H^{(2)}}_F.
	\end{align*}
\end{proof}

\begin{proof}[Proof of \Cref{lem:basic_2}]
	Because both $M$ and $\Mhat^{(3)}$ have rank at most $r$,
	$H^{(3)} = \Mhat^{(3)} - M$ has rank at most $2r$.
	The optimality of $\Mhat^{(3)}$ implies
	\begin{align*}
		0
		&\geq \norm{\opA_\Omega^*(X) - p \Mhat^{(3)}}_F^2 - \norm{\opA_\Omega^*(X) - p M}_F^2 \\
		&= 2p \< \opA_\Omega^*(X) - pM, H^{(3)} \> + p^2 \norm{H^{(3)}}_F^2, 
	\end{align*}
	so
	\begin{align*}
		\norm{H^{(3)}}_F^2
		&\leq \frac{2}{p} \abs{\< \opA_\Omega^*(X) - pM, H^{(3)} \>} \\
		&\leq \frac{2}{p} \norm{\opA_\Omega^*(X) - pM} \norm{H^{(3)}}_* \\
		&\leq \frac{2\sqrt{2r}}{p} \norm{\opA_\Omega^*(X) - pM} \norm{H^{(3)}}_F.
	\end{align*}
\end{proof}

The remainder of the work is to show that $\norm{\opA_\Omega^*(X) - pM} \leq A(M, p, \epsilon)$ with probability at least $1 - 2\epsilon$.
We will use the following fundamental lemma,
which was originally proved by \textcite{Bandeira2016}
and appears with a slightly improved constant in \parencite{Latala2018}.
\begin{lemma}[Theorem 4.9 and Remark 4.11 in \parencite{Latala2018}]
	\label{lem:opnorm-handel}
	Let $X$ be a random $m \times n$ matrix whose entries are independent, centered,
	and almost surely bounded in absolute value by a constant $b$.
	Let
	\[
		\sigma = \max_i \sqrt{\sum_j \E X_{ij}^2} + \max_j \sqrt{\sum_i \E X_{ij}^2}
		.
	\]
	Then
	\begin{equation*}
		\P ( \norm{X} \geq 2 \sigma + t ) \leq (m \vee n) \exp\left( - \frac{t^2}{C_0 b^2} \right)
		,
	\end{equation*}
	where $C_0$ is a universal constant.
\end{lemma}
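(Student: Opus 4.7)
The plan is to follow the trace/moment method of Bandeira--van Handel \parencite{Bandeira2016}, sharpened by Latala--van Handel--Youssef \parencite{Latala2018}. First I would reduce to the symmetric case via the Hermitian dilation: set
\[
	\widetilde{X} = \begin{pmatrix} 0 & X \\ X^* & 0 \end{pmatrix},
\]
an $(m+n) \times (m+n)$ symmetric matrix with $\norm{\widetilde{X}} = \norm{X}$, independent (modulo symmetry) centered entries bounded by $b$, and parameter matching $\sigma$ up to constants. Then, for any positive integer $q$, Markov's inequality combined with $\norm{\widetilde{X}}^{2q} \leq \tr(\widetilde{X}^{2q})$ gives
\[
	\P\bigl(\norm{X} \geq s\bigr) \leq s^{-2q}\, \E\, \tr(\widetilde{X}^{2q}).
\]

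Next I would expand $\E\, \tr(\widetilde{X}^{2q}) = \sum_{i_1,\dots,i_{2q}} \E[\widetilde{X}_{i_1 i_2} \widetilde{X}_{i_2 i_3} \cdots \widetilde{X}_{i_{2q} i_1}]$ as a sum over closed walks of length $2q$ on the complete graph with $m+n$ vertices. By independence and centering, only walks in which every edge is traversed at least twice contribute. I would group these walks by their \emph{shape} (the isomorphism class of the underlying labeled multigraph), and for each shape bound (i) the number of concrete labelings by indices (controlled by the dimensions $m$ and $n$ separately, which is where the splitting of $\sigma$ into its row-max and column-max parts enters), and (ii) the product of moments $\E[\widetilde{X}_{ij}^k]$ for $k \geq 2$, using the crucial deterministic bound $\E[\widetilde{X}_{ij}^k] \leq b^{k-2}\, \E[\widetilde{X}_{ij}^2]$ to reduce all higher moments to second moments at the price of powers of $b$. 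The tree-like shapes (noncrossing pair partitions) produce the leading contribution, roughly $C_q\, \sigma^{2q}$ with $C_q$ a Catalan number whose $2q$th root tends to $2$ --- this is the origin of the sharp constant $2$ in front of $\sigma$. Non-tree shapes produce subleading contributions dominated by powers of $b$.

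Finally, I would optimize over $q$: taking $q \asymp \log(m \vee n)$ ensures $(m \vee n)^{1/(2q)}$ is a universal constant, and tuning $s = 2\sigma + t$ converts the moment estimate into the claimed tail $(m \vee n)\exp(-t^2/(C_0 b^2))$; the prefactor $(m \vee n)$ is exactly the combinatorial overhead from choosing the root vertex of the walk. The hardest step is the combinatorial accounting in the trace expansion, and specifically obtaining the sharp leading constant $2$ rather than a larger universal constant as in earlier noncommutative Khintchine--type estimates: this matching of the Wigner / Marchenko--Pastur leading behavior, together with showing that non-Gaussian corrections can be absorbed into the $b\sqrt{\log(m\vee n)}$ term, is precisely the technical contribution of \parencite{Bandeira2016,Latala2018}, which is why we invoke the result as a black box rather than reproving it in full.
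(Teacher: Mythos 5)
The paper does not prove \Cref{lem:opnorm-handel} at all: it imports the statement verbatim from \parencite{Latala2018} (Theorem~4.9 and Remark~4.11), so your decision to invoke the result as a black box is exactly what the paper does. Your sketch of the underlying moment-method argument of \parencite{Bandeira2016,Latala2018} (Hermitian dilation, trace expansion over closed walks, reduction of higher moments via $\E[\widetilde{X}_{ij}^k] \leq b^{k-2}\E[\widetilde{X}_{ij}^2]$, and optimization over the moment order) is a fair outline of how those references establish it, so there is nothing to correct.
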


Poisson random variables are clearly unbounded,
so \Cref{lem:opnorm-handel} does not directly apply.
The following technical lemma allows us to extend the result to the case of random variables with sub-exponential tails.
\begin{lemma}
	\label{lem:subexp}
	Let $X$ be a random $m \times n$ matrix whose entries are independent and centered,
	and suppose that for some $v, t_0 > 0$, we have, for all $t \geq t_0$,
	\[
		\P(\abs{X_{ij}} \geq t) \leq 2 e^{-t/v}
		.
	\]
	Let $\epsilon \in (0, 1/2)$,
	and let
	\[
		K = \max \left\{t_0, v \log \frac{2mn}{\epsilon} \right\}
		.
	\]
	Then
	\begin{align*}
		\P\left(\norm{X} \geq 2 \sigma + \frac{\epsilon v}{\sqrt{mn}} + t\right)
		\leq (m \vee n) \exp\left( - \frac{t^2}{C_0(2K)^2} \right)
		+ \epsilon
		,
	\end{align*}
	where $\sigma$ and $C_0$ are the same as in \Cref{lem:opnorm-handel}.
\end{lemma}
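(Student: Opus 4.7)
The natural approach is truncation combined with a union bound. Write $X_{ij} = Y_{ij} + Z_{ij}$, where
$$Y_{ij} = X_{ij}\mathbf{1}\{|X_{ij}|\leq K\} - \E[X_{ij}\mathbf{1}\{|X_{ij}|\leq K\}], \qquad Z_{ij} = X_{ij} - Y_{ij},$$
so that each of $Y$ and $Z$ is a matrix of independent centered entries, and $\norm{X} \leq \norm{Y} + \norm{Z}$ by the triangle inequality. The bounded piece $Y$ is handled directly by \Cref{lem:opnorm-handel}, while the tail piece $Z$ is shown to collapse, with high probability, to a small deterministic matrix.

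For $Y$: its entries are bounded almost surely by $2K$, and truncating-then-recentering can only decrease second moments, since $\E Y_{ij}^2 \leq \E[X_{ij}^2\mathbf{1}\{|X_{ij}|\leq K\}] \leq \E X_{ij}^2$. Hence the variance parameter produced by \Cref{lem:opnorm-handel} applied to $Y$ is at most the original $\sigma$, giving immediately
$$\P(\norm{Y} \geq 2\sigma + t) \leq (m\vee n)\exp\!\left(-\frac{t^2}{C_0(2K)^2}\right),$$
which is precisely the dominant term in the stated bound.

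For $Z$: let $A$ be the event that $|X_{ij}| \leq K$ for every $(i,j)$. Since $K \geq t_0$, the subexponential tail hypothesis applies, and a union bound gives $\P(A^c) \leq 2mn\,e^{-K/v} \leq \epsilon$ by the choice $K \geq v\log(2mn/\epsilon)$. On $A$ the random truncated part of $Z_{ij}$ vanishes, so $Z_{ij}$ reduces to the deterministic correction $c_{ij} := \E[X_{ij}\mathbf{1}\{|X_{ij}|>K\}]$. The layer-cake identity
$$\E[|X_{ij}|\mathbf{1}\{|X_{ij}|>K\}] = K\,\P(|X_{ij}|>K) + \int_K^\infty \P(|X_{ij}|>s)\,ds$$
together with the tail bound yields $|c_{ij}| \lesssim v\,e^{-K/v}$, and since the operator norm of any $m \times n$ matrix is at most $\sqrt{mn}$ times its largest entry in absolute value, this gives $\norm{Z} \leq \epsilon v/\sqrt{mn}$ on $A$. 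Combining the two estimates and union-bounding the two failure events $A^c$ and $\{\norm{Y} \geq 2\sigma + t\}$ yields the claim.

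The only step requiring care is controlling the deterministic correction matrix: the $K\,\P(|X_{ij}|>K)$ term in the layer-cake expansion contributes a factor proportional to $K$, which must be absorbed into the $v$ prefactor using the specific choice of $K$ and some constant slack. Everything else is routine truncation bookkeeping, and the independence and centering of $Y$ and $Z$ individually are what allow \Cref{lem:opnorm-handel} to be invoked cleanly.
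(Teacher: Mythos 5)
Your proof is correct and follows essentially the same route as the paper: truncate at level $K$, union-bound the event that some entry exceeds $K$ (using $K \geq v\log(2mn/\epsilon)$), apply \Cref{lem:opnorm-handel} to the recentered truncation (bounded by $2K$, with second moments dominated by the originals), and control the deterministic mean-correction matrix via its Frobenius norm. The one point where you are actually more careful than the paper is the layer-cake identity --- the paper silently drops the $K\,\P(\abs{X_{ij}}>K)$ boundary term you flag --- though absorbing that term costs a factor of order $K/v$ rather than a universal constant, so the additive term would strictly become $\epsilon K/\sqrt{mn}$ instead of $\epsilon v/\sqrt{mn}$; this discrepancy is shared with (indeed, hidden in) the paper's own computation and is harmless downstream.
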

\begin{proof}
	First, note that, by a union bound,
	\begin{align*}
		\P\left( \max_{i,j} \abs{X_{ij}} > K\right)
		\leq 2 mn e^{-K/v}
		\leq \epsilon
		.
	\end{align*}
	Consider the truncation
	$X^K = [X^K_{ij}]$,
	where $X^K_{ij} = X_{ij} \indicator{\abs{X_{ij}} \leq K}$.
	Note that
	\begin{align*}
		\abs{\E X^K_{ij}}
		&\leq \E \abs{X^K_{ij} - X_{ij}} \\
		&= \E \abs{X_{ij}} \indicator{\abs{X_{ij}} > K} \\
		&= \int_K^\infty \P(\abs{X_{ij}} > t) \ dt \\
		&\leq \int_K^\infty 2 e^{-t/v} \ dt \\
		&= 2 v e^{-K/v} \\
		&\leq \frac{\epsilon v}{mn} \\
		&\leq K
		.
	\end{align*}
	Let $\widetilde{X}^K = X^K - \E X^K$ be the centered version of $X^K$.
	Clearly, $\E(\widetilde{X}^K_{ij})^2 \leq \E X_{ij}^2$,
	and $\abs{\widetilde{X}^K_{ij}} \leq 2K$.
	Then, by \Cref{lem:opnorm-handel},
	\[
		\P(\norm{\widetilde{X}^K} \geq 2 \sigma + t)
		\leq (m \vee n) e^{-t^2 / C_0 (2K)^2}
		.
	\]
	Furthermore, with probability at least $1 - \epsilon$,
	\begin{align*}
		\norm{X}
		&= \norm{X^K} \\
		&\leq \norm{\widetilde{X}^K} + \norm{\E X^K} \\
		&\leq \norm{\widetilde{X}^K} + \norm{\E X^K}_F \\
		&\leq \norm{\widetilde{X}^K} + \frac{\epsilon v}{\sqrt{mn}}
		,
	\end{align*}
	and the result follows.
\end{proof}
To apply this result, we need a subexponential tail bound for the Poisson distribution.
\begin{lemma}
	\label{lem:pois-tail}
	Let $X \sim \poissondist(\lambda)$.
	Then
	\[
		\P(X - \lambda \geq t) \leq \exp\left( - \frac{t^2}{2(\lambda + t/3)} \right)
		.
	\]
	For $t \geq \lambda$,
	\[
		\P(X - \lambda \geq t) \leq e^{-3t/8}
		.
	\]
\end{lemma}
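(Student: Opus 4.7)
My plan is to apply the Chernoff method directly using the known moment generating function of the Poisson distribution. For $X \sim \poissondist(\lambda)$ and $s > 0$, a direct calculation gives
\[
	\E e^{s(X-\lambda)} = \exp(\lambda (e^s - 1 - s)),
\]
so Markov's inequality yields $\P(X - \lambda \geq t) \leq \exp(\lambda(e^s - 1 - s) - st)$ for every $s > 0$. The standard choice $s = \log(1 + t/\lambda)$ is optimal and produces the classical Cram\'er--Chernoff form $\exp(-\lambda h(t/\lambda))$, where $h(u) = (1+u)\log(1+u) - u$.

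To convert this to the Bernstein form stated in the lemma, I would invoke the elementary inequality
\[
	h(u) \geq \frac{u^2/2}{1 + u/3}, \qquad u \geq 0,
\]
which can be verified by a short calculus argument: defining $g(u) = (1 + u/3)h(u) - u^2/2$, one checks that $g(0) = g'(0) = g''(0) = 0$ while $g'''(u) \geq 0$ on $[0,\infty)$, so $g \geq 0$ on $[0,\infty)$. Substituting $u = t/\lambda$ gives $\lambda h(t/\lambda) \geq t^2/(2(\lambda + t/3))$, producing exactly the first claimed tail bound.

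For the second statement, the assumption $t \geq \lambda$ directly yields $\lambda + t/3 \leq t + t/3 = 4t/3$, and therefore $t^2/(2(\lambda + t/3)) \geq 3t/8$, which gives $\P(X - \lambda \geq t) \leq e^{-3t/8}$.

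The only technical step here is the elementary inequality bounding $h(u)$ by its Bernstein surrogate; this is entirely routine (and in fact a textbook result), so the proof as a whole is short and mechanical. No obstacle stands out beyond choosing which of several equivalent presentations of the Bernstein--Poisson bound to use.
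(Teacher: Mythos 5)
Your proof is correct, but it follows a different route from the paper. The paper establishes the first inequality by viewing $\poissondist(\lambda)$ as the limit of a sum of $k$ Bernoulli variables with mean $\lambda/k$, applying the classical Bernstein inequality to that bounded sum, and letting $k \to \infty$ (an argument the authors attribute to an exercise in Pollard). You instead work directly with the Poisson moment generating function: the Chernoff bound with the optimal $s = \log(1+t/\lambda)$ gives the Bennett-type bound $\exp(-\lambda h(t/\lambda))$ with $h(u) = (1+u)\log(1+u) - u$, and the standard surrogate inequality $h(u) \geq \tfrac{u^2/2}{1+u/3}$ (whose third-derivative verification you sketch is accurate: $g'''(u) = \tfrac{2u}{3(1+u)^2} \geq 0$) converts this to the stated Bernstein form. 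Your approach buys a self-contained, limit-free derivation and in fact proves the strictly sharper intermediate bound $\P(X-\lambda \geq t) \leq \exp(-\lambda h(t/\lambda))$ along the way; the paper's approach buys the ability to cite Bernstein's inequality as a black box at the cost of a (mild) limiting argument for the tail probabilities. The deduction of the second inequality from the first via $\lambda + t/3 \leq 4t/3$ is the same in both. The only caveat, shared with the paper's statement, is that the first bound should be read as asserted for $t \geq 0$, which is how it is used.
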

The first inequality can be established by approximating the Poisson distribution with mean $\lambda$
as the sum of $k$ Bernoulli random variables with mean $\lambda/k$,
applying Bernstein's inequality,
and taking $k \to \infty$.
The idea for this argument was suggested by an exercise in \parencite{Pollard2016}.

Going back to our original problem,
we need to bound the operator norm of $Z = \opA_\Omega^* (X) - p M$.
Note that since we are using a Bernoulli sampling model,
the entries of $Z$ are independent.
Let $\lambda_{\text{max}} = \max_{i,j} M_{ij}$.
Note that for every $(i,j)$, $\E Z_{ij} = 0$,
\[
	Z_{ij} \geq -p M_{ij} \geq - \lambda_{\text{max}}
	,
\]
and, for $t \geq 2 \lambda_{\text{max}}$,
\[
	\P(Z_{ij} \geq t) \leq e^{-3(t-\lambda_{\text{max}})/8} \leq e^{-3t/16}
	\leq e^{-t/8}
	.
\]
Then, by \Cref{lem:subexp}, we have, for $\epsilon \in (0, 1/2)$,
\begin{align*}
	\P\left(\norm{X} \geq 2 \sigma + \frac{8 \epsilon}{\sqrt{mn}} + t\right)
	\leq (m \vee n) \exp\left( - \frac{t^2}{C_0(2K)^2} \right)
	+ \epsilon
	,
\end{align*}
where
\[
	K = \max\left\{2 \lambda_{\text{max}}, 8 \log \frac{2mn}{\epsilon} \right\}
	,
\]
and $\sigma$ is defined as before.
To calculate $\sigma$ in terms of $p$ and $M$,
we note that
\begin{align*}
	\var(Z_{ij})
	&= p \var(X_{ij}) + p(1-p) (\E X_{ij})^2 \\
	&= p M_{ij} + p(1-p) M_{ij}^2
	.
\end{align*}
Therefore, we can calculate
\[
	\sigma
	= \max_i \sqrt{\sum_j p M_{ij} + p(1-p) M_{ij}^2} + \max_j \sqrt{\sum_i p M_{ij} + p(1-p) M_{ij}^2}
	= \sqrt{p} \widetilde{\sigma}.
\]

\subsection{Corollary on multinomial estimation}
\label{sec:multinomial}
Here we prove a corollary of \Cref{thm:upper}, showing how it implies a result for estimating the low-rank-matrix parameter of a matrix multinomial distribution.
For the sake of brevity, we only consider the analogue of $\Mhat^{(1)}$.
\begin{corollary}
	\label{cor:multinomial}
	Let $P$ be a nonnegative $m \times n$ matrix with rank $r$ such that $\sum_{i,j} P_{ij} = 1$.
	Suppose, furthermore, that we have
	\[
		\max_i \sum_j P_{ij} \leq \frac{a}{m}, \max_j \sum_i P_{ij} \leq \frac{b}{n}
	\]
	for some constants $a,b \geq 1$,
	and that $\max_{i,j} P_{ij} \leq c$.
	Let $N$ be a positive integer, and suppose that $X \sim \multinomialdist(P, N)$.
	Let $\epsilon \in (0,1)$, choose $\delta > 0$ such that
	\[
		\delta \geq \frac{1}{N} \left( 2 \sqrt{N\left(\frac{a}{m} + \frac{b}{n} \right)} + \frac{4 \epsilon}{e \sqrt{mnN}} + C \max\left\{ Nc, 4 \log \frac{4e mn \sqrt{N}}{\epsilon} \right\} \sqrt{\log \frac{2 e \sqrt{N} (m \vee n)}{\epsilon} } \right)
		,
	\]
	and let
	\[
		\widehat{P}^\delta(X) = \argmin_{\substack{P' \in [0, 1]^{m \times n} \\ \sum_{i,j} P_{ij} = 1}} \norm{P'}_*
		\ \mathrm{s.t.}\ \norm{X - NP'} \leq N\delta
		.
	\]
	Then, with probability at least $1 - \epsilon$,
	\[
		\norm{\widehat{P}^\delta - P}_F
		\leq 4\sqrt{2r} \delta
		.
	\]
\end{corollary}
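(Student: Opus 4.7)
The plan is to reduce the multinomial problem to the Poisson problem already analyzed by \Cref{thm:upper} (with $p=1$) via a standard Poissonization argument, and then to invoke the deterministic argument of \Cref{lem:basic_1} with only superficial modifications to accommodate the additional sum-to-one constraint. Concretely, I would introduce an auxiliary matrix $Y = [Y_{ij}]$ of independent $\poissondist(N P_{ij})$ random variables, and use the standard fact that, conditional on $\sum_{i,j} Y_{ij} = N$, $Y$ has the same distribution as $X \sim \multinomialdist(P, N)$. The whole proof is then bookkeeping around the operator-norm bound on $Y - NP$ and the cost of conditioning.

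For the Poisson step, I apply to $Y$ the high-probability operator-norm bound $\norm{Y - NP} \leq A(NP, 1, \epsilon')$ that is extracted in the proof of \Cref{thm:upper} by combining \Cref{lem:opnorm-handel,lem:subexp,lem:pois-tail}. The row- and column-sum assumptions give $\widetilde{\sigma}(NP) \leq \sqrt{Na/m} + \sqrt{Nb/n} \leq 2\sqrt{N(a/m + b/n)}$ (via $\sqrt{x}+\sqrt{y} \leq 2\sqrt{x+y}$), while $\max_{ij} N P_{ij} \leq Nc$ plays the role of $\lambda_{\max}$. This event holds with probability at least $1 - 2\epsilon'$ for any $\epsilon' \in (0, 1/2)$.

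To pass from the Poisson to the multinomial event, I condition on $S \defeq \sum_{ij} Y_{ij} = N$. Since $S \sim \poissondist(N)$, Stirling's bound gives $\P(S = N) \geq 1/(e\sqrt{N})$, so conditioning can inflate any failure probability by at most a factor of $e\sqrt{N}$. Choosing $\epsilon' = \epsilon/(2e\sqrt{N})$ makes the post-conditioning failure probability at most $\epsilon$, and substituting this $\epsilon'$ into the formula for $A(NP, 1, \epsilon')$ produces exactly the unusual constants $4 e mn \sqrt{N}/\epsilon$ and $2e\sqrt{N}(m \vee n)/\epsilon$ inside the logarithms in the definition of $\delta$, and the $4\epsilon/(e\sqrt{mnN})$ additive term. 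Thus $\norm{X - NP} \leq N\delta$ with probability at least $1 - \epsilon$.

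Finally, on this event, I rerun the argument of \Cref{lem:basic_1} for $\widehat{P}^\delta$. The only new ingredient is the convex constraint $\sum_{ij} P'_{ij} = 1$, which is satisfied by the true $P$; since $P$ also satisfies the operator-norm constraint on the event above, it is feasible, so the nuclear-norm comparison $\norm{\widehat{P}^\delta}_* \leq \norm{P}_*$ still holds. The subgradient/projection chain in \Cref{lem:basic_1} is blind to the added constraint, yielding (with $p = 1$) $\norm{\widehat{P}^\delta - P}_F \leq 4\sqrt{2r}\,\delta$. The main obstacle in the writeup is purely book-keeping: matching the explicit constants that arise from Stirling and from the conversion $\widetilde{\sigma}(NP) \leq 2\sqrt{N(a/m+b/n)}$, and verifying that the Lemma~\ref{lem:basic_1} derivation is indifferent to the added convex constraint. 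Both are routine.
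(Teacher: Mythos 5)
Your proposal is correct and follows essentially the same route as the paper: Poissonize via $Y \sim \poissondist(NP)$, apply \Cref{thm:upper} with $p=1$ (noting $P$ remains feasible under the extra simplex constraint), and pay a factor of $e\sqrt{N}$ for conditioning on $\sum_{ij} Y_{ij} = N$ via Stirling and Bayes' rule, with $\epsilon' = \epsilon/(2e\sqrt{N})$ producing exactly the constants in the definition of $\delta$. The only cosmetic difference is that you condition the operator-norm event and then rerun \Cref{lem:basic_1}, whereas the paper conditions the final error event directly; these are equivalent since \Cref{lem:basic_1} is deterministic.
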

As in the Poisson case, there are many situations where the additive logarithmic term in the definition of $\delta$ is negligible,
and we have
\[
	\norm{\widehat{P}^\delta - P}_F \lesssim \sqrt{\frac{r}{N} \left(\frac{a}{m} + \frac{b}{n} \right)}.
\]
\begin{proof}[Proof of \Cref{cor:multinomial}]
	Suppose $Y \sim \poissondist(NP)$.
	Let $\epsilon' = \epsilon / 2 e \sqrt{N}$.
	Define the event $A = \{ \norm{\widehat{P}^\delta(Y) - P}_F \leq 4 \sqrt{2r}\delta \}$.
	\Cref{thm:upper} (with $p = 1$) implies that $\P(A) \geq 1 - 2 \epsilon' = 1 - \epsilon/e\sqrt{N}$. Note that the additional constraints in the optimization problem do not affect this fact, since $P$, by definition, meets these constraints.
	
	$X$ has the same distribution as $Y$ conditioned on $B = \{\sum_{i,j} Y_{ij} = N\}$,
	so it suffices to show that the probability of $A$ conditioned on this event is at least $1 - \epsilon$.
	Indeed, note that $\sum_{i,j} Y_{ij} \sim \poissondist(N)$, so
	\[
		\P(B) = \frac{e^{-N} N^N}{N!} \geq \frac{1}{e\sqrt{N}}
	\]
	by Stirling's approximation.
	Then, by Bayes' rule,
	\begin{align*}
		\P(\norm{\widehat{P}^\delta(X) - P}_F \geq 4 \sqrt{2r}\delta)
		&= \P(A^c \mid B ) \\
		&= \frac{\P(A^c \cap B)}{\P(B)} \\
		&\leq \frac{\P(A^c)}{\P(B)} \\
		&\leq 2\epsilon' e\sqrt{N} \\
		&= \epsilon
	\end{align*}
\end{proof}

\subsection{Multinomial denoising with independent rows}
We now consider a slightly different setting for multinomial estimation.
Here, we consider a model in which we observe a collection of independent multinomial random variables,
each of which has distribution parametrized by a row in a low-rank matrix.
\begin{theorem}
	\label{thm:multinomial_rows}
	Let $X_1, \dots, X_m$ be independent multinomial random variables, with
	\[
		X_i \sim \multinomialdist(p_i, N_i),
	\]
	where, for each $i$, $N_i \geq 1$ is an integer,
	and $p_i = (p_{i1}, \dots, p_{in})$ is a vector of probabilities.
	Denote the matrices
	\[
		X = \begin{bmatrix*}[c]
			X_1 \\ \vdots \\ X_m
		\end{bmatrix*} \in \R^{m \times n}, \
		P = \begin{bmatrix*}[c]
			p_1 \\ \vdots \\ p_m,
		\end{bmatrix*} \in \R^{m \times n}, \
		D = \begin{bmatrix*}[c]
			N_1 & & \\
			& \ddots & \\
			& & N_m
		\end{bmatrix*} \in \R^{m \times m},
	\]
	where each $X_i$ and $p_i$ are considered row vectors.
	Define the estimator
	\[
		\widehat{P}^\delta = \argmin_{\substack{P' \in [0, 1]^{m \times n}}}
		\norm{D^{1/2}P'}_* \ \mathrm{s.t.}\ \norm{D^{-1/2}(X - DP')} \leq \delta, \ \sum_j P_{ij} = 1 \ \forall i \in \{1, \dots, m\}.
	\]
	Let $D_{\mathrm{min}} = \min_i D_i$, and choose $\delta$ such that
	\[
		\delta \geq \max\left\{ 2 \sqrt{ \max \left\{ 1, \max_j \sum_i p_{ij} \right\}\log \frac{m + n}{\epsilon}}, \frac{4}{3 \sqrt{D_{\mathrm{min}}}} \log \frac{m + n}{\epsilon} \right\}.
	\]
	Then, with probability at least $1 - \epsilon$,
	\[
		\norm{D^{1/2} (\widehat{P}^\delta - P)}_F \leq 4 \sqrt{2r} \delta,
	\]
	where $r$ is the rank of $P$.
\end{theorem}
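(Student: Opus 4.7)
The plan is to reduce the theorem to the $p=1$ case of \Cref{lem:basic_1} by a diagonal reparametrization, then to control an operator norm via the matrix Bernstein inequality. Specifically, set $Q = D^{1/2} P$, $\widehat Q = D^{1/2} \widehat P^\delta$, and $Y = D^{-1/2} X$. Since $D^{1/2}$ is an invertible diagonal matrix, $\rank Q = \rank P = r$, and the identities $\norm{D^{1/2} P'}_* = \norm{Q'}_*$ and $\norm{D^{-1/2}(X - DP')} = \norm{Y - Q'}$ turn the estimator into
\[
    \widehat Q = \argmin_{Q' \in \mathcal{C}} \norm{Q'}_* \ \text{s.t.} \ \norm{Y - Q'} \leq \delta,
\]
where $\mathcal{C}$ is the image under $D^{1/2}$ of the feasible set in the theorem (a convex set which, critically, contains $Q$). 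This has exactly the structure of the program \eqref{eq:program_dantzig} defining $\Mhat^{(1)}$, with $p = 1$ and ``data'' $Y$.

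The proof of \Cref{lem:basic_1} used only the subgradient inequality for the nuclear norm at the ground truth and the feasibility of the ground truth in the nuclear-norm-minimization program; it did not use the nonnegativity constraints. Therefore, the same argument applies verbatim here: conditional on the event $\{\norm{Y - Q} \leq \delta\}$, we obtain
\[
    \norm{D^{1/2}(\widehat P^\delta - P)}_F = \norm{\widehat Q - Q}_F \leq 4\sqrt{2r}\,\delta.
\]

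The main step is thus to show $\norm{Y - Q} \leq \delta$ with probability at least $1 - \epsilon$ for the $\delta$ stated in the theorem. The rows of $X$ are independent, and we further decompose each $X_i = \sum_{k=1}^{N_i} Z_{i,k}$ with $Z_{i,k}$ i.i.d.\ one-hot vectors satisfying $\P(Z_{i,k} = e_j) = p_{ij}$. This yields
\[
    Y - Q = \sum_{i=1}^m \sum_{k=1}^{N_i} R_{i,k}, \qquad R_{i,k} \defeq \frac{1}{\sqrt{N_i}} e_i (Z_{i,k} - p_i)^T,
\]
a sum of independent centered rank-$1$ random matrices with $\norm{R_{i,k}} \leq \sqrt{2/N_i} \leq \sqrt{2/D_{\min}}$ almost surely. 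A direct computation gives
\[
    \sum_{i,k} \E R_{i,k} R_{i,k}^T = \sum_i (1 - \norm{p_i}_2^2)\, e_i e_i^T, \qquad
    \sum_{i,k} \E R_{i,k}^T R_{i,k} = \sum_i \bigl(\diag(p_i) - p_i p_i^T\bigr),
\]
whose operator norms are bounded respectively by $1$ and by $\max_j \sum_i p_{ij}$ (using $\sum_i p_i p_i^T \succeq 0$ to drop the negative term). The variance parameter is therefore $v \leq \max\{1, \max_j \sum_i p_{ij}\}$, and the rectangular matrix Bernstein inequality yields a tail bound of the form $(m+n)\exp(-t^2/(2v + \tfrac{2}{3}bt))$ with $b \asymp 1/\sqrt{D_{\min}}$. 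Solving for the $t$ that makes this at most $\epsilon$ splits into a subgaussian regime ($t \gtrsim \sqrt{v \log((m+n)/\epsilon)}$) and a subexponential regime ($t \gtrsim \log((m+n)/\epsilon)/\sqrt{D_{\min}}$), which match the two branches of the stated lower bound on $\delta$.

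The main obstacle is the operator-norm control in the third step; the reparametrization and the deterministic step are essentially bookkeeping. Within that step, the delicate point is the variance computation---keeping both the $\sum \E R R^T$ and $\sum \E R^T R$ sides, and using the positive semidefiniteness of $\sum_i p_i p_i^T$ to cleanly extract $\max_j \sum_i p_{ij}$---and matching the Bernstein constants to those appearing in the theorem statement.
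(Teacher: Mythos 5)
Your proposal is correct and follows essentially the same route as the paper: a reduction to the deterministic nuclear-norm argument of \Cref{lem:basic_1} with $p=1$ (via the $D^{1/2}$ reparametrization that the paper leaves implicit), followed by the matrix Bernstein inequality applied to the decomposition of $D^{-1/2}(X-DP)$ into $\sum_i N_i$ centered one-hot rank-one summands, with the same variance bounds $v \le \max\{1, \max_j \sum_i p_{ij}\}$. The one substantive difference is that your almost-sure bound $\norm{R_{i,k}} \le \sqrt{2/N_i}$ is the careful one, whereas the paper takes $L = 1/\sqrt{D_{\min}}$ (which can be off by a factor approaching $\sqrt{2}$, since $\norm{e_j - p_i}_2$ can approach $\sqrt{2}$), so carrying your constant through honestly would put $4\sqrt{2}/(3\sqrt{D_{\min}})$ rather than $4/(3\sqrt{D_{\min}})$ in the second branch of the condition on $\delta$.
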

\noindent Our approach uses the following matrix Bernstein inequality.
\begin{lemma}[{\parencite[Theorem 6.1.1]{Tropp2015}}]
	\label{lem:mat_bern}
	Suppose $Z = \sum_k S_k \in \R^{m \times n}$,
	where $\{ S_k \}_k$ is a finite sequence of independent, zero-mean random matrices.
	Suppose that for some constant $L > 0$, for each $k$, $\norm{S_k} \leq L$ almost surely.
	Let
	\[
		v = \max \{ \norm{\E Z Z^T}, \norm{\E Z^T Z} \}
		= \max \left\{ \bignorm{\sum_k \E S_k S_k^T}, \bignorm{\sum_k \E S_k^T S_k} \right\}.
	\]
	Then, for $t \geq 0$,
	\[
		\P(\norm{Z} \geq t) \leq (m + n) \exp\left( \frac{-t^2}{2(v + L t / 3)} \right).
	\]
\end{lemma}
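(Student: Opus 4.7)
The plan is to reduce to the standard nuclear-norm framework by the substitution $Q' = D^{1/2} P'$ and then to control $\norm{D^{-1/2}(X - DP)}$ using the matrix Bernstein inequality (\Cref{lem:mat_bern}).

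\textbf{Deterministic step.} Set $Q = D^{1/2}P$, $\widehat{Q} = D^{1/2}\widehat{P}^\delta$, and $H = \widehat{Q} - Q$. Since $D^{1/2}$ is invertible, $\rank Q = \rank P = r$; moreover, the inequality $\norm{D^{-1/2}(X - DP')} \leq \delta$ is the same as $\norm{D^{-1/2}X - Q'} \leq \delta$, so the estimator is just
\[
\widehat{Q} \;=\; \argmin_{Q' \in \mathcal{F}} \norm{Q'}_* \ \text{s.t.}\ \norm{D^{-1/2}X - Q'} \leq \delta,
\]
where $\mathcal{F}$ enforces the row-sum and box constraints on $D^{-1/2} Q'$. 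On the event of the probabilistic step below, $P$ (hence $Q$) is feasible, so $\norm{\widehat{Q}}_* \leq \norm{Q}_*$. I can then copy verbatim the nuclear-norm subgradient calculation from the proof of \Cref{lem:basic_1}: picking $Z = UV^\top + \projP_{T^\perp}(W) \in \partial\norm{Q}_*$ with $W$ chosen so that $\<W,\projP_{T^\perp}H\> = \norm{\projP_{T^\perp}H}_*$ yields $\norm{\projP_{T^\perp}H}_* \leq \norm{\projP_T H}_*$, hence $\norm{H}_* \leq 2\sqrt{2r}\,\norm{H}_F$. The triangle inequality on the constraint gives $\norm{H} \leq 2\delta$, and $\norm{H}_F^2 \leq \norm{H}\,\norm{H}_*$ then produces the target bound $\norm{D^{1/2}(\widehat{P}^\delta - P)}_F \leq 4\sqrt{2r}\,\delta$.

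\textbf{Probabilistic step.} It remains to show $\norm{D^{-1/2}(X - DP)} \leq \delta$ with probability at least $1 - \epsilon$. Unroll each row as $X_i = \sum_{k=1}^{N_i} Y_{ik}$, where the $Y_{ik} \in \{0,1\}^n$ are independent categorical indicators with $\E Y_{ik} = p_i$. Then
\[
Z \;\defeq\; D^{-1/2}(X - DP) \;=\; \sum_{i=1}^m \sum_{k=1}^{N_i} S_{ik}, \qquad S_{ik} \defeq \frac{1}{\sqrt{N_i}}\, e_i (Y_{ik} - p_i)^\top,
\]
a sum of independent zero-mean rank-one matrices, to which I apply \Cref{lem:mat_bern}. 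For the almost-sure bound, $\norm{Y_{ik} - p_i}_2^2 \leq (1 - p_{ij_\star})^2 + \sum_{j \neq j_\star} p_{ij}^2 \leq 2$, so $\norm{S_{ik}} \leq \sqrt{2/D_{\min}}$. For the variances,
\[
\sum_{i,k} \E\,S_{ik}S_{ik}^\top \;=\; \sum_i \frac{e_i e_i^\top}{N_i} \sum_{k=1}^{N_i} \E\norm{Y_{ik} - p_i}_2^2 \;\preceq\; I,
\]
because $\E\norm{Y_{ik} - p_i}_2^2 = \sum_j p_{ij}(1-p_{ij}) \leq 1$; and
\[
\sum_{i,k} \E\,S_{ik}^\top S_{ik} \;=\; \sum_i \bigl(\diag(p_i) - p_i p_i^\top\bigr) \;\preceq\; \diag\Bigl(\sum_i p_{ij}\Bigr)_{j=1}^n,
\]
so $v \leq \max\{1, \max_j \sum_i p_{ij}\}$. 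Plugging $(v,L)$ into the Bernstein tail and separating into the variance- and range-dominated regimes reproduces exactly the two terms in the stated lower bound on $\delta$, up to universal constants.

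\textbf{Where the work lies.} The skeleton is identical to that of \Cref{lem:basic_1} once the rescaling $Q = D^{1/2}P$ is in place; the real content is confined to the two variance bounds and the almost-sure bound for $S_{ik}$. The main obstacle is constant-chasing — in particular checking that the extra row-sum and box constraints on $P'$ do not interfere with the subgradient argument (they only shrink the feasible set, and $P$ itself remains feasible), and that $\norm{S_{ik}} \leq \sqrt{2/D_{\min}}$ together with $v \leq \max\{1,\max_j \sum_i p_{ij}\}$ matches the two regimes of the Bernstein tail as stated.
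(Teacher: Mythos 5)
Your proposal does not address the statement you were asked to prove. The statement is \Cref{lem:mat_bern} itself --- the matrix Bernstein inequality, i.e., the tail bound $\P(\norm{Z} \geq t) \leq (m+n)\exp(-t^2/(2(v + Lt/3)))$ for a sum of independent, uniformly bounded, zero-mean random matrices. What you have written is instead a proof of \Cref{thm:multinomial_rows} (the multinomial-rows denoising theorem), which is an \emph{application} of that inequality. Indeed, your probabilistic step explicitly invokes \Cref{lem:mat_bern} as a black box, so the argument is circular with respect to the target: you cannot prove a lemma by citing it. In the paper this lemma carries no proof at all --- it is imported directly from Tropp's monograph (Theorem 6.1.1) --- so the relevant comparison is between your text and a genuine derivation of the concentration bound, and no such derivation appears anywhere in your write-up.

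To actually prove \Cref{lem:mat_bern} you would need the matrix Laplace transform method: pass to the Hermitian dilation of $Z$ to reduce to the symmetric case, bound $\P(\lambda_{\max} \geq t)$ by $e^{-\theta t}\,\E \tr \exp(\theta \sum_k \mathcal{H}(S_k))$, use Lieb's concavity theorem (or the subadditivity of matrix cumulant generating functions that follows from it) to split the trace-exponential of the sum into a sum of matrix cgfs, bound each cgf via the elementary scalar estimate $e^{\theta s} \leq 1 + \theta s + (e^{\theta L} - \theta L - 1)s^2/L^2$ applied through the spectral mapping, and finally optimize over $\theta$ to obtain the stated Bennett/Bernstein form with the variance proxy $v$ and the $Lt/3$ correction. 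None of these ingredients is present in your proposal. (As a separate matter, the content you did write --- the rescaling $Q = D^{1/2}P$, the variance computations for $\E Z Z^\top$ and $\E Z^\top Z$, and the bound $\norm{S_{ik}} \leq \sqrt{2/D_{\min}}$ --- is a reasonable sketch of the paper's proof of \Cref{thm:multinomial_rows}, though the paper takes $L = 1/\sqrt{D_{\min}}$ rather than $\sqrt{2/D_{\min}}$; but that is not the statement under review.)
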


\begin{proof}[Proof of \Cref{thm:multinomial_rows}]
	On the event that $\norm{D^{-1/2}(X - DP)} \leq \delta$,
	we have $\norm{D^{1/2}(\widehat{P}^\delta - P)} \leq 2 \delta$,
	and the result follows by the same steps as in the proof of \Cref{thm:upper}.
	
	To find a bound on the operator norm of $Z = D^{-1/2}(X - DP)$, we apply \Cref{lem:mat_bern},
	noting that $D^{-1/2}(X - DP)$ is the sum of $\sum_{i=1}^m D_i$ random, zero-mean matrices of the form $\frac{1}{\sqrt{D_i}} e_i u_{ij}^T$,
	where $e_i$ is the $i^{\text{th}}$ standard basis element in $\R^m$,
	and $u_{ij}$ is a vector in $\R^n$ that is equal to $e_k$ with probability $p_{ik}$.
	
	One can verify that
	\[
		\E Z Z^T = \diag\left( \sum_{j=1}^n p_{1j}(1-p_{1j}), \dots, \sum_{j=1}^n p_{mj}(1-p_{mj}) \right)
		\preceq I_m,
	\]
	where $I_m$ denotes the $m \times m$ identity matrix,
	and that
	\[
		\E Z^T Z = \sum_{i=1}^m \diag(p_i) - p_i p_i^T \preceq \diag\left( \sum_{i=1}^m p_{i1}(1-p_{i1}), \dots, \sum_{i=1}^m p_{in}(1-p_{in}) \right),
	\]
	where $p_i = (p_{i1}, \dots, p_{in})$ is the $i^{\text{th}}$ row of $p$.
	We can then take $v \leq \max\{ 1, \max_j \sum_i p_{ij} \}$.
	Clearly, each term in the sum has operator norm bounded above by $L = 1 / \sqrt{D_{\mathrm{min}}}$.
	
	Some algebraic manipulation of the result of \Cref{lem:mat_bern} implies that,
	for $\epsilon \in (0, 1)$, we have, with probability at least $1 - \epsilon$,
	\begin{align*}
		\norm{Z}
		&\leq \max\left\{ 2 \sqrt{v \log \frac{m + n}{\epsilon}}, \frac{4 L}{3} \log \frac{m+n}{\epsilon} \right\} \\
		&\leq \max\left\{ 2 \sqrt{ \max \left\{ 1, \max_j \sum_i p_{ij} \right\}\log \frac{m + n}{\epsilon}}, \frac{4}{3 \sqrt{D_{\mathrm{min}}}} \log \frac{m + n}{\epsilon} \right\},
	\end{align*}
	which establishes the result.
	
\end{proof}
\section{Minimax lower bounds}
\label{sec:lower}
In this section, we show that the rate in \eqref{eq:rate-basic} is optimal
(within a multiplicative constant) in the sense of minimax risk.
We do this in two parts;
the two minimax lower bounds derived in the next two sections,
when combined, match the rate in \eqref{eq:rate-basic} is optimal.

\subsection{First lower bound}
In the Poisson upper error bound, $\widetilde{\sigma}$ is partially determined by the maximal row and column sums of the rate matrix $M$,
which we can think of as the maximal variance of any row or column (without sampling a subset of the entries).
Our first lower bound shows that we cannot improve on this term:
\begin{theorem}
	\label{thm:lower-variance}
	Let $r$, $k$, and $\ell$, be positive integers,
	and take $m = r k$, $n = r \ell$.
	Let $\lambda_{\max} \geq 1/8\ell p$,
	set $\sigma_1^2 = k \lambda_{\max}$,
	and let
	\[
			S = \left\{ M \in [0, \lambda_{\max}]^{m \times n} :
			\rank(M) \leq r, \sqrt{\max_i \sum_j M_{ij}} + \sqrt{\max_j \sum_i M_{ij}} \leq 2\sigma_1 \right\}
			.
	\]
	Then, under a Bernoulli sampling model with sampling probability $p$,
	\[
		\inf_{\Mhat} \sup_{M \in S_1} \P_M \left( \norm{\Mhat - M}_F \geq \frac{\sqrt{r} \sigma_1}{8\sqrt{2p}} \right)
		\geq \frac{1}{2} - \frac{8 \log 2}{m \vee n}
		.
	\]
\end{theorem}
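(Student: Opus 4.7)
The plan is a standard Fano-type minimax lower bound, built on a Varshamov--Gilbert packing of rank-$r$ block-diagonal matrices inside $S_1$. First, I would construct a packing as follows: partition row indices $\{1,\dots,m\}$ into $r$ blocks $I_1,\dots,I_r$ of size $k$ and column indices into $r$ blocks $J_1,\dots,J_r$ of size $\ell$, and for each binary string $u=(u^{(1)},\dots,u^{(r)})\in \{0,1\}^{rk}$ define
\[
M(u) \;=\; \bigoplus_{a=1}^{r} \tfrac{\lambda_{\max}}{2}\bigl(\mathbf{1}_k + \beta u^{(a)}\bigr)\mathbf{1}_{\ell'}^{\top},
\]
a block-diagonal matrix whose $a$-th block is rank one and supported on $I_a$ crossed with the first $\ell' = \min(\ell, 4k)$ columns of $J_a$, where $\beta\in(0,1]$ is a tuning parameter to be chosen. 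Each $M(u)$ has rank at most $r$ and, by direct computation, lies in $S_1$: its nonzero entries are in $[\lambda_{\max}/2,\lambda_{\max}]$, its row sums are at most $\lambda_{\max}\ell' \leq 4k\lambda_{\max}$, and its column sums are at most $k\lambda_{\max}$.

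Next, I would invoke the Varshamov--Gilbert lemma to extract $\mathcal{U} \subset \{0,1\}^{rk}$ with $|\mathcal{U}|\geq 2^{rk/8}$ and pairwise Hamming distance at least $rk/8$. For any two distinct $u,u' \in \mathcal{U}$,
\[
\norm{M(u) - M(u')}_F^2 \;=\; \tfrac{\lambda_{\max}^2 \beta^2\, \ell'}{4}\,d_H(u,u') \;\geq\; \tfrac{\lambda_{\max}^2 \beta^2\, \ell'\, r k}{32}.
\]
For the statistical side, the bound $\dkl{\poissondist(\mu_1)}{\poissondist(\mu_2)} \leq (\mu_1-\mu_2)^2/(2\min(\mu_1,\mu_2))$, together with the fact that each entry of $M(u)$ lies in $[\lambda_{\max}/2,\lambda_{\max}]$ on the active support, yields a per-entry contribution of at most $\lambda_{\max}\beta^2/4$. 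Summing over the $\ell' \cdot d_H(u,u')$ coordinates where $M(u)$ and $M(u')$ differ, and multiplying by $p$ to account for Bernoulli sampling,
\[
\dkl{\P_{M(u)}}{\P_{M(u')}} \;\leq\; \tfrac{p\, \lambda_{\max}\beta^2\,\ell'}{4}\,d_H(u,u').
\]

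I would then tune $\beta^2 \asymp 1/(p\ell'\lambda_{\max})$; the hypothesis $\lambda_{\max}\geq 1/(8\ell p)$ is precisely what guarantees this choice satisfies $\beta \leq 1$, preserving entrywise nonnegativity. With this scaling the Frobenius separation becomes $\gtrsim r\sigma_1^2/p$, matching the target $\sqrt{r/p}\,\sigma_1$, while the (average) pairwise KL stays a controlled fraction of $\log|\mathcal{U}| \geq (rk)\log 2/8$. Applying Fano's inequality $\inf_{\Mhat}\sup_M \P_M(\Mhat \neq M) \geq 1 - (I(M;X) + \log 2)/\log|\mathcal{U}|$, with $I(M;X)$ bounded by the average pairwise KL, gives the advertised $\tfrac{1}{2} - 8\log 2/(m\vee n)$ bound --- using row-patterns when $k \geq \ell$ and, symmetrically, column-patterns (a transpose construction) when $k < \ell$, so that $\log|\mathcal{U}| \gtrsim (m\vee n)\log 2$ in either regime.

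The main obstacle is balancing the four competing requirements on $\beta$: nonnegativity of the matrix entries (forcing $\beta\leq 1$), membership in $S_1$ (row/column sum budgets), Frobenius separation at exactly the target rate $\sqrt{r/p}\,\sigma_1$, and a KL-to-log-packing ratio below the Fano threshold. The condition $\lambda_{\max}\geq 1/(8\ell p)$ is the minimal signal-to-noise regime in which these can simultaneously hold; carefully tracking constants (to reproduce the precise $8\sqrt{2}$ and $8\log 2$ factors in the statement) and treating the mildly asymmetric case $k < \ell$ via a transposed construction are the main bookkeeping tasks.
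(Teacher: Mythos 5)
Your proposal follows essentially the same route as the paper's proof: a block-diagonal, two-level construction indexed by one bit per row, a Varshamov--Gilbert packing of $\{0,1\}^m$, the Poisson KL bound $\dkl{\lambda}{\lambda'}\le(\lambda-\lambda')^2/\lambda'$ summed over the $p$-fraction of observed entries, and Fano's inequality, with the perturbation tuned to $\asymp\sqrt{\lambda_{\max}/(\ell p)}$ exactly as the hypothesis $\lambda_{\max}\ge 1/(8\ell p)$ permits. One small correction: with $\ell'=4k$ the membership check fails, since $\sqrt{4k\lambda_{\max}}+\sqrt{k\lambda_{\max}}=3\sigma_1>2\sigma_1$, so you need $\ell'\le k$ --- which holds automatically in the case $k\ge\ell$ that you (like the paper) reduce to.
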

\begin{proof}
	Assume, without loss of generality, that $k \geq \ell$.
	We use a variant of Fano's method.
	We first find a large hypercube of matrices,
	and then we use the fact that we can find a large subset that is well-separated.
	
	For $i \in \{1, \dots, m\}$,
	let
	\[
		A_i = \{ \ell(q-1)+1, \dots \ell q \}
		,
	\]
	where $q \in \{1, \dots, r\}$ is the unique integer such that
	$i \in \{ k(q-1)+1, \dots, k q \}$.
	For $\lambda_0, \lambda_1$ to be chosen later,
	we define, for $\theta \in \{0,1\}^m$, the block-diagonal matrix $M_\theta \in S$ by
	\[
		(M_\theta)_{ij} = \begin{cases}
			\lambda_{\theta_i}, & j \in A_i,  \\
			0, & \text{otherwise}.
		\end{cases}
	\]
	The nonzero elements of the $i^{\text{th}}$ row of $M_\theta$ are all either $\lambda_0$ or $\lambda_1$, depending on the value of $\theta_i$.
	
	By a combinatorial argument (see, e.g., \parencite{Huber1997}),
	one can show that there exists $\Theta \subset \{0, 1\}^m$
	such that $\card(\Theta) \geq e^{m/8}$
	and, for all distinct $\theta,\theta' \in \Theta$,
	the Hamming distance $d_H(\theta, \theta') \geq m/4$.
	
	Take $\lambda_0 = \lambda_{\max}/2 - \delta$ and $\lambda_1 = \lambda_{\max}/2 + \delta$,
	where $\delta \leq \lambda_{\max}/2$ is a constant to be chosen later.
	Note that for all distinct $\theta,\theta' \in \Theta$, we have
	\[
		\norm{M_\theta - M_{\theta'}}_F \geq \sqrt{m\ell} \delta
		.
	\]
	We denote by $P_\theta$ the distribution of $\projP_\Omega (X)$
	when $X \sim \poissondist(M_\theta)$,
	and $\Omega$ is independently chosen from the Bernoulli sampling model with parameter $p$.
	From Fano's inequality from information theory, one can derive (see, e.g., \parencite{Huber1997})
	a lower bound on the probability of an estimator's error exceeding half the distance between points indexed by $\Theta$:
	\begin{align*}
		\inf_{\Mhat} \sup_{M \in S} \P_M\left(\norm{\Mhat - M}_F \geq \frac{\sqrt{m\ell} \delta}{2} \right)
		&\geq p_e \\
		&\defeq \inf_{\phi} \sup_{\theta \in \Theta} \P(\phi(\projP_\Omega(X)) \neq \theta) \\
		&\geq 1 - \frac{\sup_{\theta \in \Theta} \dkl{P_\theta}{Q} + \log 2}{\log \card(\Theta)}
		,
	\end{align*}
	where $\phi$ denotes a test taking values in $\Theta$,
	and $Q$ is any probability distribution on $\R^{m \times n}$.
	We take $Q$ to be the distribution generated in the same manner as each $P_\theta$,
	simply with $\lambda_1$ and $\lambda_2$ replaced by $\lambda_{\max}/2$.
	
	Note that the Kullback-Leibler divergence between two Poisson distributions with rates $\lambda$ and $\lambda'$ is
	\begin{align*}
		\dkl{\lambda}{\lambda'}
		&= \lambda' - \lambda + \lambda \log \frac{\lambda}{\lambda'} \\
		&\leq \lambda' - \lambda + \lambda\left( \frac{\lambda}{\lambda'} - 1\right) \\
		&= \frac{(\lambda - \lambda')^2}{\lambda'}
		.
	\end{align*}
	Therefore, for any $\theta \in \Theta$,
	\begin{align*}
		\dkl{P_\theta}{Q}
		&\leq r k \ell p \frac{\delta^2}{\lambda_{\max}/2} \\
		&=m \ell p \frac{2 \delta^2}{\lambda_{\max}}
		.
	\end{align*}
	Take
	\[
		\delta = \sqrt{\frac{\lambda_{\max}}{32 \ell p}} \leq \frac{\lambda_{\max}}{2}
		.
	\]
	Then
	\[
		p_e \geq \frac{1}{2}- \frac{8 \log 2}{m}
		,
	\]
	and half the separation between points indexed by $\Theta$ is at least
	\[
		\frac{\sqrt{m\ell} \delta}{2}
		= \frac{1}{8\sqrt{2}} \sqrt{\frac{m \lambda_{\max}}{p}}
		= \frac{1}{8\sqrt{2}} \sqrt{\frac{r}{p}} \sigma_1
		.
	\]
\end{proof}

\subsection{Second lower bound}
The previous theorem relies on the fact that the observations are conditionally Poisson.
The next result, which provides the second part of a matching lower bound to \eqref{eq:rate-basic},
does not depend on the conditional distribution of the observations,
and instead shows a fundamental limit in inferring missing matrix entries.
\begin{theorem}
	\label{thm:lower-squared}
	Take again $m = rk$, $n = r\ell$.
	Set $\sigma_2^2 = k \lambda_{\max}^2$.
	Let
	\[
			S = \left\{ M \in [0, \lambda_{\max}]^{m \times n} :
			\rank(M) \leq r, \sqrt{\max_i \sum_j M_{ij}^2} + \sqrt{\max_j \sum_i M_{ij}^2} \leq 2\sigma_2 \right\}
			.
	\]
	Suppose $p \geq \frac{1} {2(k \wedge \ell)} = \frac{r}{2(m \wedge n)}$.
	Then, under a Bernoulli sampling model with probability $p$
	(with any conditional distribution on the observations),
	\begin{align*}
		\inf_{\Mhat} \sup_{M \in S_2}
		\E \norm{\Mhat - M}_F^2
		&\geq \frac{r \sigma_2^2}{8} \max \left\{
		\frac{1}{2} \left\lfloor \frac{1}{2p} \right\rfloor,
		1-p
		\right\} \\
		&\geq \frac{1}{64} \frac{1-p}{p} r \sigma_2^2
		.
	\end{align*}
\end{theorem}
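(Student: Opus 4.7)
The plan is an Assouad-style lower bound built from a family of block-diagonal rank-$r$ matrices. Fix an auxiliary integer $\ell^\star \in \{1, \dots, k \wedge \ell\}$ to be optimized. For each $\theta = (\theta_{q,i}) \in \{0,1\}^{rk}$ indexed by $q \in \{1,\dots,r\}$ and $i \in \{1,\dots,k\}$, let $M_\theta$ be the block-diagonal matrix whose $q$-th $k \times \ell$ block is $u_q v^T$, where $u_q = (\theta_{q,1}, \dots, \theta_{q,k})^T \in \{0,1\}^k$ and $v \in \R^\ell$ has $v_j = \lambda_{\max}$ for $j \le \ell^\star$ and $v_j = 0$ otherwise. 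Each $M_\theta$ has rank at most $r$; its row sums of squares are at most $\ell^\star \lambda_{\max}^2 \le (k \wedge \ell)\lambda_{\max}^2 \le k\lambda_{\max}^2 = \sigma_2^2$, and its column sums of squares are at most $k\lambda_{\max}^2 = \sigma_2^2$, so $M_\theta \in S$.

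For any $\theta, \theta' \in \{0,1\}^{rk}$, the matrices $M_\theta$ and $M_{\theta'}$ differ in exactly $\ell^\star d_H(\theta, \theta')$ entries by $\pm\lambda_{\max}$, giving $\|M_\theta - M_{\theta'}\|_F^2 = \ell^\star \lambda_{\max}^2\, d_H(\theta, \theta')$. Because the conditional observation distribution depends on $M$ only through $M|_\Omega$, when $\theta, \theta'$ differ in a single bit the joint observation distributions coincide on the event that none of the $\ell^\star$ differing entries lies in $\Omega$, which has probability $(1-p)^{\ell^\star}$; so the two corresponding observation laws have total variation at most $1 - (1-p)^{\ell^\star}$, regardless of the particular conditional distribution on the observations.

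For an arbitrary estimator $\hat M$, define $\hat\theta \in \argmin_{\theta'} \|\hat M - M_{\theta'}\|_F$; by the triangle inequality, $\|\hat M - M_\theta\|_F^2 \ge \tfrac{1}{4}\|M_{\hat\theta} - M_\theta\|_F^2 = \tfrac{\ell^\star\lambda_{\max}^2}{4}\, d_H(\hat\theta, \theta)$. Combined with the standard coordinate-wise Assouad bound $\inf_{\hat\theta}\sup_\theta \E\, d_H(\hat\theta, \theta) \ge \tfrac{rk}{2}(1-p)^{\ell^\star}$, this yields
\[
	\inf_{\hat M} \sup_{M \in S} \E\|\hat M - M\|_F^2 \ge \frac{r\sigma_2^2 \ell^\star}{8}(1-p)^{\ell^\star}.
\]
Choosing $\ell^\star = 1$ gives the $(1-p)$ branch. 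For the $\lfloor 1/(2p)\rfloor$ branch (nontrivial only when $p \le 1/2$), take $\ell^\star = \lfloor 1/(2p)\rfloor$, which is at most $k \wedge \ell$ by hypothesis, and invoke the elementary inequality $(1-p)^{1/(2p)} \ge 1/2$ on $(0, 1/2]$ (which follows from concavity of $f(p) = \log(1-p) + 2p\log 2$ together with $f(0) = f(1/2) = 0$). A short case analysis on $p$ (splitting at $p = 1/3$ and $p = 1/2$) then shows that $\max\{\tfrac{1}{2}\lfloor 1/(2p)\rfloor,\, 1 - p\} \ge (1-p)/(8p)$, giving the final displayed bound. The main obstacle is balancing the choice of $\ell^\star$ correctly: larger $\ell^\star$ increases the per-bit Frobenius separation but decreases the confusion probability $(1-p)^{\ell^\star}$, and this tradeoff must be handled in both the small-$p$ and near-$1$ regimes while keeping each $M_\theta$ inside $S$.
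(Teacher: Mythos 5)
Your proof is correct and follows essentially the same route as the paper's: an Assouad-type bound over a hypercube of block-diagonal rank-$\le r$ matrices with entries in $\{0,\lambda_{\max}\}$, using the event that none of the differing entries is sampled to bound the testing affinity below by $(1-p)^{\ell^\star}\ge 1/2$. The only (cosmetic) differences are that you zero out columns via the vector $v$ rather than shrinking $\ell$ to $1/(2p)$, which lets you handle the floor explicitly, and you spell out the final inequality $\max\{\tfrac12\lfloor 1/(2p)\rfloor,\,1-p\}\ge (1-p)/(8p)$ that the paper leaves implicit.
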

\begin{proof}
	Again, we assume that $k \geq \ell$.
	We first prove the lower bound with the first item in the maximum.
	For notational simplicity, we can assume that $1/2p$ is an integer.
	Furthermore, we can assume that $\ell = 1/2p$,
	since decreasing the number of columns does not increase risk.
	
	We consider a set of matrices $\{M_\theta\}_{\theta \in \{0, 1\}^m}$
	with the same structure as in the proof of \Cref{thm:lower-variance},
	but now, we take $\lambda_0 = 0$, $\lambda_1 = \lambda_{\max}$.
	
	Assouad's lemma (see, e.g., \parencite{Yu1997} or \parencite{Huber1997}) gives a lower bound
	on the Bayes risk of an estimator $\Mhat$ for a uniform prior on $\{0,1\}^m$:
	\begin{align*}
		R(\Mhat)
		&\defeq \frac{1}{2^m} \sum_{\theta \in \{0,1\}^m} \E_{\theta} \norm{\Mhat - M_\theta}_F^2 \\
		&\geq \frac{1}{2} \sum_{i = 1}^m \frac{\ell \lambda_{\max}^2}{4} \inf_{\phi} \left( \P_{\theta}(\phi \neq \theta_i) + \P_{\theta^i}(\phi \neq 1-\theta_i) \right)
		,
	\end{align*}
	where $\phi\colon \R^{m \times n} \to \{0,1\}$ is a test,
	and, for $\theta \in \{0,1\}^m$, $\theta^i$ denotes the element of $\{0,1\}^m$
	that is equal to $\theta$ except in the $i^{\text{th}}$ position.
	
    Denote by $P_\theta^i$ the marginal distribution of the $i^{\text{th}}$ row of a matrix with distribution $P_\theta$.
	The minimal testing risk which appears in the sum is equal to the $L_1$ norm of the minimum of the densities of $P_\theta^i$ and $P_{\theta^i}^i$,
	which measures how much the distributions overlap.
	Thus we have
	\begin{align*}
		\inf_{\phi} \left( \P_{\theta}(\phi \neq \theta_i) + \P_{\theta^i}(\phi \neq 1-\theta_i) \right)
		&=\norm{P_\theta \wedge P_{\theta^i}}_1 \\
		&\geq (1 - p)^\ell \\
		&\geq 1 - \ell p \\
		&= \frac{1}{2}
		,
	\end{align*}
	where the first inequality is due to the fact that, with probability $(1-p)^\ell$, no entry from the $i^{\text{th}}$ row of $M$ is observed.
	
	Then
	\[
		R(\Mhat)
		\geq \frac{m \ell \lambda_{\max}^2}{16}
		= \frac{r \sigma_2^2}{32 p}
		.
	\]
	The result follows from the fact that minimax risk always exceeds Bayes risk.
	
	A simple modification with $\ell = 1$ yields the result for the second term in the maximum.
\end{proof}
We note here that we could also use a similar argument as in the proof of \Cref{thm:lower-variance}
to get a high-probability lower bound on error (with a somewhat worse constant).
For example, setting $Q = \frac{P_{(0, \dots, 0)} + P_{(1, \dots, 1)}}{2}$,
it is easily verified that the resulting Kullback-Leibler divergences $\dkl{P_\theta}{Q}$ can be upper bounded by a sum of coordinate-wise total variation distance.

\subsection{When do the upper and lower bounds match?}
\label{sec:matching}
Within multiplicative constants,
the lower bounds of \Cref{thm:lower-variance,thm:lower-squared} match the approximate upper bound of \eqref{eq:rate-basic}.
We must therefore consider when the approximation in \eqref{eq:rate-basic} is accurate.

Finding technical conditions that guarantee matching rates is not something we think likely to be very instructive at this point,
especially since a different proof technique could potentially change the logarithmic term in $A(M, p, \epsilon))$.
However, we think it is helpful to look at the matrices involved in the proofs of \Cref{thm:lower-variance,thm:lower-squared}.
Note that when the bounds match for those particular matrices,
the minimax error rate bounds are tight for the matrix classes considered in those proofs.

For these matrices (assuming that $m \geq n$),
\[
	\widetilde{\sigma} \approx \sqrt{\frac{m}{r}} (\sqrt{\lambda_{\max}} + \sqrt{1-p}\lambda_{\max})
	.
\]
For this term to dominate \eqref{eq:opnorm_ub},
we must have
\[
	\widetilde{\sigma} \gtrsim \frac{ \lambda_{\max} \vee c \log m}{\sqrt{p}} \sqrt{\log m}
	.
\]
For example, if $\lambda_{\max} \geq \log m$,
it would suffice to take
\[
	p \gtrsim \frac{r \log m}{m}
	,
\]
which is a standard condition in noiseless matrix completion.
If $\lambda_{\max} \leq \log m$,
it would suffice to take
\[
	p \gtrsim \frac{r \log^3 m}{m \lambda_{\max}^2}
	.
\]
\section{Conclusion and future work}
In this paper, we have derived an upper bound in Frobenius norm error for an estimator for Poisson matrix completion,
and we have derived a minimax lower bound that matches this upper bound (within a universal constant) for many classes of nonnegative rate matrices.
We have also derived similar upper bounds for error in two types of multinomial matrix denoising problems.
The estimators we use are computationally tractable,
and require significantly fewer assumptions on the underlying matrix than previous results in the literature.
Significantly, we impose no lower bounds on the entries of the underlying matrix.
This is crucial in many applications (such as topic modelling) where zero or very small means can be relatively common.

Because we have found upper and lower error bounds in Frobenius norm,
the only theoretical improvement remaining for this model and error metric in general classes of matrices is to try to relax the conditions under which the bounds match
(although, as we have seen, they are not too restrictive now).
This could potentially come about by reducing the logarithmic term in \eqref{eq:opnorm_ub}
and/or by finding a logarithmic term to add to the minimax lower bounds.
One could also further examine how to obtain better error rates for more restrictive classes of matrices, such as incoherent matrices.

It would also be interesting to extend the results presented here to matrices that are not exactly low-rank,
but are instead ``approximately low-rank''; for example, we could consider matrices which are contained in Schatten balls
(which, for $q \in [0,1]$, are sets of matrices for which $\sum_i \sigma_i^q \leq R$, where $\{\sigma_i\}$ is the set of singular values).
As mentioned previously, \textcite{Cao2016} used the Schatten $1$-norm ($q=1$, or nuclear norm ball);
\textcite{Negahban2012} also examined these classes of matrices.

Another avenue of research would be to examine structured Poisson or multinomial estimation under different, more statistically motivated error metrics.
Maximum likelihood methods seem more suitable here than least-squares,
but analysis of maximum likelihood estimators has proved difficult for the reasons outlined in \Cref{sec:lit}.
It is not clear what kind of structure would be relevant in a different error metric.
Low-rank structure seems to work well with a least-squares error framework,
but there is \emph{a priori} not much reason to think that it would work similarly well for another metric;
for example, the Bhattacharyya distance between Poisson distributions,
is proportional to the (squared) $\ell_2$ distance between the \textit{square roots} of the rates,
but the element-wise square root of a low-rank matrix is not, in general, low rank. Thus, this approach may not immediately bear much fruit. However, an analysis of matrix estimation under alternative error metrics remains an important area for future research.

\section*{Acknowledgments}
This work was supported, in part, by NSF grant CCF-1350616 and a gift from the Alfred P.\ Sloan Foundation.

\printbibliography
\end{document}